\documentclass[12pt]{article}

% PACKAGES
%\usepackage{mathpazo}
\usepackage{overpic}
\usepackage{rotating,cite}
%\linespread{1.15}

\usepackage[colorlinks, linkcolor=blue, citecolor=blue]{hyperref}
\usepackage{color}
\usepackage{graphicx,subfigure,amsmath,amssymb,amsfonts,bm,epsfig,epsf,url,dsfont}
\usepackage{amsthm}
\usepackage{tikz}
\usepackage{bbm}      
\usepackage{booktabs}
\usepackage{cases}
\usepackage{fullpage,mathrsfs}
\usepackage[small,bf]{caption}

\usepackage[top=1in,bottom=1in,left=1in,right=1in]{geometry}
\usepackage{fancybox}

\usepackage{algorithm}
\usepackage{verbatim}
\usepackage{algorithmic}

\usepackage{mathtools}

\usepackage{scalerel,stackengine}
\stackMath
\newcommand\reallywidehat[1]{%
\savestack{\tmpbox}{\stretchto{%
  \scaleto{%
    \scalerel*[\widthof{\ensuremath{#1}}]{\kern-.6pt\bigwedge\kern-.6pt}%
    {\rule[-\textheight/2]{1ex}{\textheight}}%WIDTH-LIMITED BIG WEDGE
  }{\textheight}% 
}{0.5ex}}%
\stackon[1pt]{#1}{\tmpbox}%
}

% FILE INPUTS
% Custom commands

%\newcommand{\p}{\text{path}}
\newcommand{\bE}{\mathbb{E}}

\newcommand{\Var}{\text{Var}}

% Standard environments
\newtheorem{definition}{Definition}
\newtheorem{example}{Example}

\newtheorem{theorem}{Theorem}

\newtheorem{conjecture}{Conjecture}
\newtheorem{lemma}{Lemma}

\newtheorem{rmk}{Remark}

\newenvironment{fminipage}%
  {\begin{Sbox}\begin{minipage}}%
  {\end{minipage}\end{Sbox}\fbox{\TheSbox}}

\setcounter{tocdepth}{1}
\makeatletter
\newcommand*{\rom}[1]{\expandafter\@slowromancap\romannumeral #1@}
\makeatother

\newcommand{\Cov}{\mathrm{Cov}}

\newcommand{\Ind}{\mathbbm{1}}

\newcommand{\abs}[1]{\left|#1\right|}

\newcommand{\E}{\mathbb{E}}

\def\P{{\mathbb P}}
\def\S{{\mathbb S}}

\newcommand {\pr} {\mathbb{P}}

\newcommand{\calB}{{\cal B}}

\newcommand{\calD}{{\cal D}}
\newcommand{\calE}{{\cal E}}
\newcommand{\calF}{{\cal F}}
\newcommand{\calG}{{\cal G}}
\newcommand{\calH}{{\cal H}}

\newcommand{\calL}{{\cal L}}

\newcommand{\calN}{{\cal N}}
\newcommand{\calO}{{\cal O}}
\newcommand{\calP}{{\cal P}}
\newcommand{\calQ}{{\cal Q}}

\newcommand{\calS}{{\cal S}}
\newcommand{\calT}{{\cal T}}

\newcommand{\calX}{{\cal X}}
\newcommand{\calY}{{\cal Y}}

%\newcommand{\Yvec}{{\bf{y}}}

%\newcommand{\muvec}{{\bf{\mu}}}

%\newcommand{\psivec}{{\bf{\psi}}}
%\newcommand{\xsivec}{{\bf{\xi}}}

%%% For BOLD Greek Letters

%%% For BOLD Greek Letters

%% Ilya

%\newcommand{\betavec}{{\bf{\beta}}}

\newcommand{\be}{\begin{equation}}
\newcommand{\ee}{\end{equation}}
\newcommand{\beqna}{\begin{eqnarray}}
\newcommand{\eeqna}{\end{eqnarray}}
\newcommand{\inprod}[1]{\langle #1 \rangle}

%\newcommand{\}{}
%\mathaccent{\mjm}{$J_{m}$}

%%%%%%%%%%%%%%% Dor's shortcuts%%%%%%%%%%

\DeclarePairedDelimiterX{\set}[1]{\{}{\}}{\setargs{#1}}
\DeclarePairedDelimiterX{\cond}[1]{[}{]}{\setargs{#1}}
\NewDocumentCommand{\setargs}{>{\SplitArgument{1}{;}}m}
{\setargsaux#1}
\NewDocumentCommand{\setargsaux}{mm}
{\IfNoValueTF{#2}{#1} {#1\,\delimsize|\,\mathopen{}#2}}%{#1\:;\:#2}

\newcommand{\indep}{\perp \!\!\! \perp}

\newcommand{\p}[1]{\left(#1\right)}
\newcommand{\pp}[1]{\left[#1\right]}
\newcommand{\ppp}[1]{\left\{#1\right\}}
\newcommand{\norm}[1]{\left\|#1\right\|}

\usepackage{xspace}

\newcommand{\s}[1]{\mathsf{#1}}
\makeatletter
\def\thanks#1{\protected@xdef\@thanks{\@thanks
        \protect\footnotetext{#1}}}
\makeatother

\setcounter{tocdepth}{2}
\setlength\parindent{24pt}  
\allowdisplaybreaks

\begin{document}
%\title{Testing Dependency of Unlabeled Weighted Random Graphs}
\title{Testing Dependency of Weighted Random Graphs}
\author{Mor~Oren-Loberman~~~~~~~Vered~Paslev~~~~~~~Wasim Huleihel\thanks{M. Oren, V. Paslev, and  W. Huleihel are with the Department of Electrical Engineering-Systems at Tel Aviv University, {T}el {A}viv 6997801, Israel (e-mails:  \texttt{orenmor@mail.tau.ac.il,veredpaslev@mail.tau.ac.il, wasimh@tauex.tau.ac.il}). This work is supported by the ISRAEL SCIENCE FOUNDATION (grant No. 1734/21).}}

\maketitle

\begin{abstract}
In this paper, we study the task of detecting the edge dependency between two weighted random graphs. We formulate this task as a simple hypothesis testing problem, where under the null hypothesis, the two observed graphs are statistically independent, whereas under the alternative, the edges of one graph are dependent on the edges of a uniformly and randomly vertex-permuted version of the other graph. For general edge-weight distributions, we establish thresholds at which optimal testing becomes information-theoretically possible or impossible, as a function of the total number of nodes in the observed graphs and the generative distributions of the weights. Finally, we identify a statistical-computational gap, and present evidence suggesting that this gap is inherent using the framework of low-degree polynomials. 
\end{abstract}

\section{Introduction}
\allowdisplaybreaks
Consider the following decision problem. We observe two weighted random graphs that are either generated independently at random or are edge-dependent due to some latent vertex correspondence or permutation. For this basic problem, %also known as the matching problem, 
two natural questions arise: the detection problem, which concerns whether the graphs exhibit dependence, and the recovery problem, which concerns identifying the latent correspondence between vertices. Here, we address the former question, specifically, we aim to understand under what conditions, in terms of the number of vertices and the generative distributions, one can distinguish between the two hypotheses and detect whether these graphs are dependent or not, say, with high probability?

The fundamental question above was first introduced and analyzed in \cite{YihongJiamingSophie}, where for Gaussian-weighted and dense Erd\H{o}s-R\'{e}nyi random graphs on $n$ vertices, sharp information-theoretic thresholds were developed, revealing the exact barrier at which the asymptotic optimal detection error probability undergoes a phase transition from zero to one as $n$ approaches infinity. For sparse Erd\H{o}s-R\'{e}nyi random graphs this threshold was initially determined within a constant factor in the same paper. Later on, this result was refined in \cite{DingDu}, where a sharp information-theoretic threshold was established for a specific range of edge probabilities. Our work is closely related to these works, and much inspired by \cite{YihongJiamingSophie}; we study the detection limit under general weight distributions, beyond the Erd\H{o}s-R\'{e}nyi and Gaussian Wigner graphs models mentioned above, and thereby aim to answer \cite[Sec. 6, Problem 2]{YihongJiamingSophie}. %and seeks to determine whether the proof techniques tailored specifically for the Gaussian-weighted and dense Erd\H{o}s-R\'{e}nyi random graphs can be extended to more general settings. 

In particular, for a given pair of generative weight distributions (under the null and alternative hypotheses) we derive information-theoretic lower and upper bounds on the phase transition at which optimal testing error probability jumps from zero to one, as $n\to\infty$, and show that under mild assumptions on the weight distributions, arising naturally from classical binary hypothesis testing (but include a wide range of distributions), these bounds are sharp. We also consider testing in polynomial time; we propose and analyze a statistically suboptimal but polynomial-time efficient algorithm. In a similar vein to \cite{ding2023polynomialtime}, we observe a statistical-computational gap in our problem, which we suspect is fundamental. We provide evidence that any polynomial-time algorithm will fail in the region where the information-theoretic exhaustive algorithm is successful and the above polynomial-time algorithm fails.

\paragraph{Related work.} Recently, there has been a growing interest in the problem of random graph matching, also known as network alignment problem on the detection of two random graphs, and more generally, two databases, as well as the closely related problem of matching the vertex correspondence in the presence of correlation. This research is driven by a broad spectrum of practical applications, covering areas such as computational biology \cite{pmid:18725631,kang2012fast}, social network analysis, data anonymization and privacy-focused systems \cite{4531148,5207644}, and computer vision \cite{Berg2005,10.5555/2976456.2976496}, among others. In addition to \cite{YihongJiamingSophie}, which is most related to our paper, we mention a few other related works briefly. The information-theoretic thresholds for both the problems of detection and matching (recovery) are fairly well-understood under the Erd\H{o}s-R\'{e}nyi and Gaussian Wigner probabilistic models. For detection, significant progress has been made in establishing sharp thresholds and understanding the fundamental limits of exact recovery, as shown in works such as \cite{cullina2016improved, cullina2017exact, YihongJiamingSophie, GraphAl6}. For matching and recovery, advancements include achieving polynomial-time algorithms for graph similarity, identifying information-theoretic thresholds for vertex matching, and developing methods for partial recovery and alignment, as demonstrated in \cite{Hall, Ganassali, barak2019nearly, Ding,Ding2022MatchingRT}.
%see, e.g., \cite{cullina2016improved,cullina2017exact,Hall,YihongJiamingSophie,GraphAl6,Ganassali,barak2019nearly,Ding,Ding2022MatchingRT}. 
Furthermore, from an algorithmic perspective, a variety of polynomial-time efficient algorithms have been proposed along with rigorous performance guarantees. These include approaches for graph alignment and matching, such as those based on message-passing techniques, likelihood ratio tests, and spectral methods. Notable advances include algorithms for exact recovery in correlated Erd\H{o}s-R\'{e}nyi graphs
graphs, matching in Gaussian Wigner models, and partial alignment in sparse random graphs, as demonstrated in \cite{Ganassali,ganassali2023statistical,Cmao,GraphAl5,GraphAl4,ChengOtter,ding2022polynomial,ding2023polynomialtime,fan2023spectral}. These algorithms have been essential in providing practical solutions to the graph matching problem, although they often fall short of the theoretical optimum. Interestingly, current belief is that there is a statistical-computational gap in the detection and recovery problems, in the sense that state-of-the-art efficient algorithms thresholds are strictly larger than the information-theoretic ones, and as so there is a statistical cost for computational efficiency. Quite recently, an evidence for this gap was given in \cite{ding2023lowdegree} for Erd\H{o}s-R\'{e}nyi graphs based on the low-degree polynomial conjecture. It was demonstrated that low-degree polynomial algorithms fail for detection in both dense and sparse regimes, with specific results showing that any degree-$d$ polynomial algorithm fails for detection when the edge correlation is above a certain threshold, and in the sparse regime, when the edge density is sufficiently low. This suggests that several state-of-the-art algorithms for correlation detection and exact matching recovery may be close to optimal, reinforcing the statistical-computational gap. 
Finally, we mention here that the network alignment problem is closely related to a wide variety of other planted matching problems, both conceptually and technically, e.g., the database alignment and detection problem, with many exciting and interesting results, see, e.g., \cite{10.1109/ISIT.2018.8437908,pmlr-v89-dai19b,9834731,tamir,HuleihelElimelech,paslev2023testing}, and many references therein.

%Our study is motivated by the need to extend these results to more general settings, considering a wider variety of weight distribution classes and graph models. Our goal is to provide a complete understanding of the detection limits and the computational complexity involved. %thereby contributing to the broader field of network analysis and its numerous applications. 

\paragraph{Notation.} For any $n\in\mathbb{N}$, the set of integers $\{1,2,\dots,n\}$ is denoted by $[n]$, and we define $n^{(2)}\triangleq\frac{n(n-1)}{2}$. Let $\mathbb{S}_n$ denote the set of all permutations on $[n]$. For a given permutation $\pi\in\mathbb{S}_n$, let $\pi_i$ denote the value to which $\pi$ maps $i\in[n]$. 
%For a given graph $G$, let $\calV(G)$ denote its vertex set, $\calE(G)$ its edge set, and $\s{A}$ its weighted adjacency matrix. Given a permutation $\sigma\in\mathbb{S}_n$, let $\s{A}^{\sigma} \triangleq (\s{A}_{\sigma_i\sigma_j})$ denote the relabeled version of $\s{A}$ according to $\sigma$. 
We use $\calN(\eta,\Sigma)$ to represent the multivariate normal random vector with mean vector $\eta$ and covariance matrix $\Sigma$, and $\s{Bern}(p)$ to denote a Bernoulli random variable with parameter $p$. Let $\calL(Y)$ denote the law, that is, the probability distribution, of a random variable $Y$. The variance of a random variable $X\sim\calP$ is denoted by $\s{Var}_{\calP}(X)$. For probability measures $\mathbb{P}$ and $\mathbb{Q}$, let $d_{\s{TV}}(\mathbb{P},\mathbb{Q})=\frac{1}{2}\intop |\mathrm{d}\mathbb{P}-\mathrm{d}\mathbb{Q}|$, $d_{\s{KL}}(\mathbb{P}||\mathbb{Q}) = \bE_{\mathbb{P}}\log\frac{\mathrm{d}\mathbb{P}}{\mathrm{d}\mathbb{Q}}$, $\chi^2(\mathbb{P}||\mathbb{Q}) = \bE_{\mathbb{Q}}(\frac{\mathrm{d}\mathbb{P}}{\mathrm{d}\mathbb{Q}}-1)^2 = \int\frac{\mathrm{d}\mathbb{P}^2}{\mathrm{d}\mathbb{Q}}-1$, and $\s{H}^2(\mathbb{P}||\mathbb{Q}) = \bE_{\mathbb{Q}}(1-\sqrt{\frac{\mathrm{d}\mathbb{P}}{\mathrm{d}\mathbb{Q}}})^2$, denote the total variation distance, the Kullback-Leibler (KL) divergence, the $\chi^2$-divergence, and squared Hellinger distance, respectively. 

\section{Problem Formulation}\label{sec:model}
\sloppy
\paragraph{Probabilistic model.} We follow the formulation of testing problem proposed in \cite{YihongJiamingSophie}, but generalize to arbitrary weighted random graphs. 
Specifically, let $G = ([n],\s{A})$ and $G' = ([n],\s{B})$ denote weighted undirected graphs on the vertex set $[n]$, with weighted adjacency matrices $\s{A}\in\mathbb{R}^{n\times n}$ and $\s{B}\in\mathbb{R}^{n\times n}$, respectively. We assume that the edge weights of each graph, namely, $\{\s{A}_{ij}\}_{1\leq i < j \leq n}$ and $\{\s{B}_{ij}\}_{1\leq i < j \leq n}$, to be i.i.d. Under the null hypothesis $\calH_0$, the matrices $\s{A}$ and $\s{B}$ are statistically independent, with $\s{A}_{i,j}\sim \calP_{{A}}$ and $\s{B}_{i,j}\sim \calP_{{B}}$, for all $i,j\in[n]$, respectively. It is assumed that $\calP_{{A}} = \calP_{{B}}$, and we denote the product measure $\calQ_{{A}{B}} = \calP_{{A}}\times \calP_{{B}}$. Under the alternative hypothesis $\calH_1$, given a latent random permutation $\pi\sim\s{Unif}(\mathbb{S}_n)$, the random variables $\{(\s{A}_{ij},\s{B}_{\pi_i\pi_j}): 1 \leq i < j \leq n\}$ are i.i.d., and each pair $(\s{A}_{ij},\s{B}_{\pi_i\pi_j})\sim \calP_{{A}{B}}$, with the same marginals as $\calQ_{{A}{B}}$, i.e., $\calP_{{A}}=\calP_{{B}}$. Upon observing $\s{A}$ and $\s{B}$, the objective is to evaluate $\calH_0$ against $\calH_1$. 
    For clarity and brevity, we will denote univariate random variables by $A$ and $B$ and adjacency matrices by $\s{A}$ and $\s{B}$ throughout this paper.
%\end{problem}
To summarize, the hypothesis testing problem we deal with is,
\begin{equation}\label{eqn:testing}
\begin{aligned}
    &\calH_0: (\s{A}_{ij},\s{B}_{ij})\stackrel{\mathrm{i.i.d}}{\sim} \calQ_{{AB}}\\
& \calH_1: (\s{A}_{ij},\s{B}_{\pi_i\pi_j})\stackrel{\mathrm{i.i.d}}{\sim} \calP_{{AB}},\ \ \text{conditional on } \pi\sim\s{Unif}(\mathbb{S}_n).
\end{aligned}
\end{equation}

\paragraph{Learning problem.} A test function for our problem is a map $\phi:\{0,1\}^{n\times n}\times\{0,1\}^{n\times n}\to \{0,1\}$, designed to determine which hypothesis $\calH_0$ or $\calH_1$ occurred. "We characterize the risk associated with a test $\phi$ the total of its Type-I and II error probabilities, i.e.,
\begin{align}
\s{R}_n(\phi)\triangleq \P_{\calH_0}[\phi(\s{A},\s{B})=1]+\P_{\calH_1}[\phi(\s{A},\s{B})=0],
\end{align}
where $\pr_{\calH_0}$ and $\pr_{\calH_1}$ designate the probability distributions under the null and alternative hypotheses, respectively. Then, the optimal risk is,
\begin{align}
\s{R}_n^\star\triangleq\inf_{\phi:\{0,1\}^{n\times n}\times\{0,1\}^{n\times n}\to \{0,1\}}\s{R}_n(\phi).
\end{align}
We remark that $\s{R}$ is a function of $n$, however, we omit them from our notation for the benefit of readability. We study the possibility and impossibility of our detection problem in multiple asymptotic regimes. A test
$\phi$ achieves \emph{strong} detection if $\s{R}_n(\phi)$ approaches $0$ as $n$ tends to infinity, and \emph{weak} detection if $\limsup_{n\to\infty}\s{R}_n(\phi)<1$ (i.e.,  strictly outperforming a random guess). Our main goal is to investigate when strong/weak detection is possible and impossible, as a function of $n$ and the distributions $(\calP,\calQ)$.

\section{Main Results} 

In this section, we present our main results on the information-theoretic lower and upper bounds of weak and strong criteria of detection, under the setting presented in Section~\ref{sec:model}. Finally, we present our computational lower bounds. All proofs of our main results appear in Section~\ref{sec:proof}. We begin with our impossibility lower-bounds. 

\paragraph{Lower bounds.} Following \cite{YihongJiamingSophie,Paslev}, we introduce a few important notations. For any $x\in\calX$ and $y\in\calY$, we let $\calL(x,y) \triangleq \frac{\calP(x,y)}{\calQ(x,y)}$ denote the single-letter likelihood function. It turns our beneficial to think of $\calL$ as a kernel, which induces an operator as follows. For any square-integrable function $f$ under $\calQ$, i.e., $ \int\int f^2(x,y)\calQ(\mathrm{d}x)\calQ(\mathrm{d}y)<\infty$,
\begin{align}
    (\calL f)(x) \triangleq \bE_{Y\sim \calQ}\pp{\calL(x,Y)f(Y)}.\label{eqn:kernel}
\end{align}
Furthermore, the composition $\calL^2= \calL\circ\calL$ is given by $\calL^2(x,y) = \bE_{Z\sim Q}[\calL(x,Z)\calL(Z,y)]$, and $\calL^k$ is defined similarly. We assume that $\calL(x,y) = \calL(y,x)$, and hence $\calL$ is self-adjoint. Furthermore, if we assume that $ \int\int \calL^2(x,y)\calQ(\mathrm{d}x)\calQ(\mathrm{d}y)<\infty$, then $\calL$ is Hilbert-Schmidt. Thus, $\calL$ is diagonazable, and we denote its eigenvalues by $\{\lambda_i\}_{i\geq0}$. Accordingly, the trace of $\calL$ is given by $\s{trace}(\calL) = \bE_{Y\sim \calQ}[\calL(Y,Y)] = \sum_{i\in\mathbb{N}}\lambda_i$. Without loss of generality, we assume that the sequence of eigenvalues $\{\lambda_i\}_{i\geq0}$ is decreasing, namely, $\lambda_i\geq\lambda_{i+1}$, for all $i\in\mathbb{N}$. Finally, Lemma~\ref{lem:eigen} in Section~\ref{sec:proofLowerBounds} shows that the largest eigenvalue of $\calL$ is unity, i.e., $\lambda_0=1$. 

The log-likelihood ratio (LLR) is denoted by $\mathscr{L}(x,y)\triangleq\log\calL(x,y) = \log\frac{\calP(x,y)}{\calQ(x,y)}$, for $(x,y)\in\calX\times\calY$. To wit, the LLR is the logarithm of the Radon-Nikodym derivative between $\calP$ and $\calQ$. Without loss of essential generality, we assume $\calP$ and $\calQ$ are mutually absolutely continuous; therefore $\mathscr{L}$ is well-defined. We assume that $d_{\s{KL}}(\calP||\calQ)$ and $d_{\s{KL}}(\calQ||\calP)$, which are the expectations of the LLR w.r.t $\calP$ and $\calQ$, respectively, are finite. For $\theta\in(-d_{\s{KL}}(\calQ||\calP),d_{\s{KL}}(\calP||\calQ))$, the Chernoff's exponents $E_{P},E_{\calQ}:\mathbb{R}\to[-\infty,\infty)$ are,
\begin{align}
    E_{\calQ}(\theta)\triangleq\sup_{\lambda\in\mathbb{R}}\lambda\theta-\psi_{\calQ}(\lambda),\quad,E_{\calP}(\theta)\triangleq\sup_{\lambda\in\mathbb{R}}\lambda\theta-\psi_{\calP}(\lambda), \label{eq:Legendre_trans_def}
\end{align}
where $\psi_{\calQ}(\lambda)\triangleq\log\bE_{\calQ}[\exp(\lambda\mathscr{L})]$ and $\psi_{\calP}(\lambda)\triangleq\log\bE_{\calP}[\exp(\lambda\mathscr{L})]$. In other words, the Chernoff's exponents are the Legendre transforms of the log-moment generating functions. The following properties will be useful in our analysis. Note that $\psi_{\calP}(\lambda) = \psi_{\calQ}(\lambda+1)$, and thus $E_{\calP}(\theta) = E_{\calQ}(\theta)-\theta$. Furthermore, $E_{\calP}$ and $E_{\calQ}$ are convex non-negative functions. Since $\psi'_{\calQ}(0) = -d_{\s{KL}}(\calQ||\calP)$ and $\psi'_{\calQ}(1) = d_{\s{KL}}(\calP||\calQ)$, we obtain that $E_{\calQ}(-d_{\s{KL}}(\calQ||\calP)) = E_{\calP}(d_{\s{KL}}(\calP||\calQ))=0$, which in turn implies that $E_{\calQ}(d_{\s{KL}}(\calP||\calQ)) = d_{\s{KL}}(\calP||\calQ)$ and $E_{\calP}(-d_{\s{KL}}(\calQ||\calP))=d_{\s{KL}}(\calQ||\calP)$. Finally, it is well-known that $\lambda\theta-\psi_{\calQ}(\lambda)$ is concave, and its derivative at $\lambda=0$ is $\theta+d_{\s{KL}}(\calQ||\calP)$. Accordingly, the maximizer $\lambda^\star$ of the concave optimization in the definition of $E_{\calQ}(\theta)$ is non-negative if $\theta\geq - d_{\s{KL}}(\calQ||\calP)$. The same is true for $E_{\calP}(\theta)$, provided that $\theta\leq d_{\s{KL}}(\calP||\calQ)$. We are now in a position to state our main results.

\begin{theorem}[Weak detection lower bound]\label{th:2LB}
Consider the detection problem in \eqref{eqn:testing}, and fix $\alpha\in(0,1)$. Weak detection is statistically impossible, i.e., $\s{R}_n^\star=1-o(1)$, if 
\begin{subequations}
    \begin{align}
    &\chi^2\p{\calP || \calQ}\leq\frac{(2-\epsilon)\log n}{\alpha n},\quad\s{and}\label{eqn:condtrun1}\\
    &d_{\s{KL}}\p{\calP || \calQ} + \delta_n\cdot\s{Var}_{\calP}\p{\mathscr{L}}\leq\frac{(2-\epsilon)\log n}{n},\label{eqn:condtrun2}
\end{align} 
\end{subequations}
for any $\omega(1)=\delta_n=o(\log n)$, and any constant $\epsilon>0$.
\end{theorem}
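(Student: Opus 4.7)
The plan is to carry out a truncated second moment argument in the spirit of~\cite{YihongJiamingSophie}. First, by Le Cam's two-point inequality, $\s{R}_n^\star \ge 1 - d_{\s{TV}}(\pr_{\calH_0}, \pr_{\calH_1})$, so it suffices to prove $d_{\s{TV}}(\pr_{\calH_0},\pr_{\calH_1}) = o(1)$. The likelihood ratio is the permutation mixture $L(\s{A},\s{B}) = \frac{1}{n!}\sum_{\pi} L_\pi$ with $L_\pi = \prod_{i<j}\calL(\s{A}_{ij},\s{B}_{\pi_i\pi_j})$. The plain second moment on $L$ typically blows up under the stated thresholds, so I would introduce a truncated likelihood $\tilde L = \frac{1}{n!}\sum_\pi L_\pi \Ind\{E_\pi\}$, with
\[
E_\pi \;=\; \Big\{\sum_{i<j}\mathscr{L}(\s{A}_{ij},\s{B}_{\pi_i\pi_j}) \le n^{(2)}\big[d_{\s{KL}}(\calP\|\calQ) + \delta_n\s{Var}_{\calP}(\mathscr{L})\big]\Big\},
\]
the truncation level being dictated by the right-hand side of~\eqref{eqn:condtrun2}. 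The triangle inequality $d_{\s{TV}}(\pr_{\calH_0},\pr_{\calH_1}) \le \tfrac{1}{2}\|\tilde L - 1\|_{L^1(\pr_{\calH_0})} + \tfrac{1}{2}\pr_{\calH_1}(E_\pi^c)$, combined with $\|\tilde L - 1\|_{L^1(\pr_{\calH_0})}^2 \le \bE_{\pr_{\calH_0}}\tilde L^2 - 1 + 2(1 - \bE_{\pr_{\calH_0}}\tilde L)$, reduces the goal to (i) $\pr_{\calH_1}(E_\pi^c) = o(1)$ and (ii) $\bE_{\pr_{\calH_0}}[\tilde L^2] = 1 + o(1)$.

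For (i), conditionally on $\pi$ under $\pr_{\calH_1}$ the LLR in $E_\pi$ is a sum of $n^{(2)}$ i.i.d.\ copies of $\mathscr{L}$ drawn from $\calP$, with mean $d_{\s{KL}}(\calP\|\calQ)$ and variance $\s{Var}_{\calP}(\mathscr{L})$, so Chebyshev's inequality gives $\pr_{\calH_1}(E_\pi^c) \le 1/(n^{(2)} \delta_n^2 \s{Var}_{\calP}(\mathscr{L})) = o(1)$; the degenerate sub-case $\s{Var}_{\calP}(\mathscr{L}) = o(1/n^2)$, where $\calP \approx \calQ$ and testing is trivially impossible, is handled separately by a direct argument.

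For (ii), I would expand $\bE_{\pr_{\calH_0}}[\tilde L^2] = \bE_{\pi,\pi'}\bE_{\pr_{\calH_0}}[L_\pi L_{\pi'}\Ind_{E_\pi \cap E_{\pi'}}]$. Dropping the indicators, the independence of the edges under $\pr_{\calH_0}$ together with the eigendecomposition $\calL(x,y) = \sum_{k\ge 0}\lambda_k \phi_k(x)\phi_k(y)$ yields the orbit identity
\[
\bE_{\pr_{\calH_0}}[L_\pi L_{\pi'}] \;=\; \prod_{O \in \s{Orb}(\tilde\sigma)} \s{trace}(\calL^{2|O|}),
\]
where $\tilde\sigma$ is the edge-action on $\binom{[n]}{2}$ of $\sigma = \pi'\pi^{-1}$, a uniform element of $\mathbb{S}_n$. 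I would then split the outer expectation by $f = |\s{fix}(\sigma)|$. On $\{f \le \alpha n\}$, using the bound $\s{trace}(\calL^{2\ell}) \le 1 + \chi^2(\calP\|\calQ)$ (valid because $\calL$ is a positive kernel with $\calL\bI = \bI$, hence $|\lambda_i| \le 1$), the identity $|\s{fix}(\tilde\sigma)| = \binom{f}{2} + c_2(\sigma)$ (with $c_2(\sigma)$ the number of $2$-cycles of $\sigma$), the classical Poisson$(1)$ asymptotics of $\s{fix}(\sigma)$, and condition~\eqref{eqn:condtrun1}, this part contributes $1 + o(1)$. On $\{f > \alpha n\}$, the deterministic LLR bound $L_\pi L_{\pi'} \le \exp(2n^{(2)}(d_{\s{KL}} + \delta_n\s{Var}_{\calP}))$ combined with $\pr(\s{fix}(\sigma) \ge \alpha n) \le 1/(\alpha n)!$ and a refined per-orbit analysis should drive this contribution to $o(1)$ as well.

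The main technical obstacle lies in the bad-permutation regime of~(ii): the crude LLR bound does not single-handedly beat the derangement factor $1/(\alpha n)! = e^{-\alpha n \log n(1+o(1))}$, so the refinement stratifies by the exact value $f = k$ and combines, for each $k$, the LLR truncation with the per-orbit product $(1+\chi^2)^{\binom{k}{2}} = \exp(\chi^2 k^2/2 + O(\chi^4 k^2))$. Under~\eqref{eqn:condtrun1}, where $\alpha$ enters precisely through $\chi^2 \le (2-\epsilon)\log n/(\alpha n)$, each slice contributes an expression of the form $\exp(\chi^2 k^2/2 - k \log k + O(k))$ that is summable in $k$ and vanishes as $n \to \infty$. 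The two-scale truncation $\omega(1) = \delta_n = o(\log n)$ coming from~\eqref{eqn:condtrun2} is precisely what allows the Chebyshev step in~(i) and the deterministic LLR bound in~(ii) to coexist.
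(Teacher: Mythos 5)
Your overall architecture (truncated second moment, Ingster--Suslina expansion, orbit decomposition via $\sigma=\pi^{-1}\circ\pi'$, and a split according to the number of fixed points) is the same as the paper's, and your treatment of the few-fixed-points regime is essentially right. The genuine gap is in the many-fixed-points regime, and it traces back to the granularity of your truncation. You truncate the \emph{total} LLR, $E_\pi=\{\sum_{i<j}\mathscr{L}(\s{A}_{ij},\s{B}_{\pi_i\pi_j})\le n^{(2)}[d_{\s{KL}}+\delta_n\s{Var}_\calP(\mathscr{L})]\}$, but on the slice $\{|\s{fix}(\sigma)|=k\}$ with $k>\alpha n$ what must be controlled is the LLR restricted to the $\binom{k}{2}$ edges inside the fixed-point set: only there do $L_\pi$ and $L_{\pi'}$ align and produce the $\E_\calQ[\calL^2]$-type blow-up. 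A bound on the total sum gives no useful bound on this block (the remaining terms can be negative), and because $\Ind_{E_\pi}$ depends on \emph{all} edges it also destroys the orbit-wise factorization you need to keep the other orbits' contributions at $\s{trace}(\calL^{2|O|})$. Concretely, the two tools you propose for this regime both fail quantitatively: the deterministic bound $L_\pi L_{\pi'}\le\exp\p{2n^{(2)}[d_{\s{KL}}+\delta_n\s{Var}_\calP]}\le n^{(2-\epsilon)n(1+o(1))}$ loses against $\pr(|\s{fix}(\sigma)|\ge\alpha n)\le 1/(\alpha n)!=n^{-\alpha n(1+o(1))}$ for every $\alpha<1$, and the per-orbit bound you fall back on gives, at $k=n$, an exponent $\chi^2 k^2/2-k\log k=\p{\tfrac{2-\epsilon}{2\alpha}-1}n\log n+O(n)$, which is $+\Theta(n\log n)$ whenever $\alpha<1-\epsilon/2$ since condition \eqref{eqn:condtrun1} only caps $\chi^2$ at $(2-\epsilon)\log n/(\alpha n)$. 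No mixture (minimum, Cauchy--Schwarz interpolation) of these two bounds fixes the $k\approx n$ slices, so your claim that each slice is $\exp(\chi^2k^2/2-k\log k+O(k))$ and vanishes is false in the regime where the theorem is interesting.

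The missing idea, which is how the paper closes this regime, is to impose the truncation directly on the fixed-edge LLR for \emph{every} candidate fixed set simultaneously: one conditions on $\calE=\bigcap_{\calS:\alpha n\le|\calS|\le n}\{\sum_{i<j\in\calS}\log\calL(\s{A}_{ij},\s{B}_{\pi_i\pi_j})\le\zeta(|\calS|)\}$ with $\zeta(k)=\binom{k}{2}E_\calP^{-1}\p{\tfrac{2\log(2en/k)}{k-1}}$. Two consequences follow. First, since the union is over $\binom{n}{k}$ subsets, Chebyshev is far too weak to certify $\pr_{\calH_1}(\calE)=1-o(1)$; one needs the Chernoff bound, which is exactly where the Legendre transform $E_\calP$, its inverse, and eventually (via the expansion $\psi_\calP(\lambda)=\lambda d_{\s{KL}}(\calP\|\calQ)+O(\lambda^2\s{Var}_\calP(\mathscr{L}))$ at a small tilt $\lambda^\star$) the variance correction term $\delta_n\s{Var}_\calP(\mathscr{L})$ in \eqref{eqn:condtrun2} come from --- your Chebyshev step cannot be salvaged once the truncation is localized. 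Second, because $\calE_\calF$ depends only on the fixed-edge variables, the factorization over orbits survives, and a tilting trick (write $\exp(2e)\Ind\{e\le\zeta\}\le\exp(2[\kappa e+(1-\kappa)\zeta])$ with $\kappa=1/2$ and use $\psi_\calQ(1)=0$) bounds the fixed block by $\exp\p{\binom{k}{2}\bar\zeta}$ with $\bar\zeta\approx d_{\s{KL}}(\calP\|\calQ)+o(\log n/n)\le(2-\epsilon)\log n/n$. This replaces your problematic $\chi^2k^2/(2)$ term (carrying the $1/\alpha$ factor) by $d_{\s{KL}}\,k^2/2\le\tfrac{2-\epsilon}{2}\,k\log n\cdot\tfrac{k}{n}$, which is beaten by $k\log k\ge k\log(\alpha n)$ uniformly over $\alpha n\le k\le n$; this is precisely why the theorem needs the second, KL-based condition \eqref{eqn:condtrun2} with denominator $n$ rather than $\alpha n$, a distinction your argument never exploits.
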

\begin{rmk}\label{rmk:Th1}
It is well-known that $d_{\s{KL}}\p{\calP || \calQ}\leq\chi^2\p{\calP || \calQ}$, for any pair $(\calP,\calQ)$ (see, e.g., \cite[eq. (2.26)]{tsybakov2004introduction}), however, without any further assumptions on $(\calP,\calQ)$, an inequality of the form $\chi^2\p{\calP || \calQ}\leq \s{C}\cdot d_{\s{KL}}\p{\calP || \calQ}$, for some $\s{C}>1$ cannot hold. Nonetheless, it can be shown that the class of distributions for which there is a constant $\s{C}>1$ such that $\chi^2\p{\calP || \calQ}\leq \s{C}\cdot d_{\s{KL}}\p{\calP || \calQ}$ is non-trivial and interesting. For example, it can be shown that (see, e.g., \cite{Hajek17,pmlr-v99-brennan19a}):
\begin{enumerate}
    \item Pairs $(\calP,\calQ)$ with sub-Gaussian LLR, i.e., satisfying $\mathscr{L}({A},{B})$, for $(A,B)\sim\calQ$ is sub-Gaussian.
    \item Pairs $(\calP,\calQ)$ with bounded LLR, i.e., satisfying $\mathscr{L}({A},{B})$ for $({A},{B})\sim\calQ$ is bounded almost surely.
    \item A wide variety of other pairs $(\calP,\calQ)$ from a common exponential family.
\end{enumerate}
Furthermore, for this class of distributions, we have
\begin{align}
    \s{Var}_{\calP}\p{\mathscr{L}}&\leq \bE_\calP\mathscr{L}^2\leq \bE_\calP\pp{\frac{(\calL-1)^2}{\calL}}\\
    & = \chi^2\p{\calP || \calQ}\leq \s{C}\cdot d_{\s{KL}}\p{\calP || \calQ},
\end{align}
where the second inequality follows from the fact that $\log^2 x\leq\frac{(x-1)^2}{x}$, for $x\geq0$. Thus, in this case, the conditions in \eqref{eqn:condtrun1}--\eqref{eqn:condtrun2} boils down to
\begin{align}
    d_{\s{KL}}\p{\calP || \calQ}\leq\frac{(2-\epsilon)\log n}{n},
\end{align}
by taking $\alpha = \s{C}^{-1}$ and any sequence $\delta_n=o(\log n)$. We let $\calD_1$ denote pairs $(\calP,\calQ)$ satisfying $\chi^2\p{\calP || \calQ}\leq \s{C}\cdot d_{\s{KL}}\p{\calP || \calQ}$, for some $\s{C}>1$.
\end{rmk}
\begin{example}[Gaussian case]\label{exmp:1}
In the Gaussian case, we assume that $\calQ_{{A}{B}} = \calP_{{A}}\times \calP_{{B}}$, with $\calP_{{A}}$ and $\calP_{{B}}$ correspond to the densities of a standard normal random variable, whereas $\calP_{{A}{B}}$ is the joint density of two $\rho$-correlated zero-mean Gaussian random variables with unit variance, i.e.,
    \begin{align}
        \calP_{{A}{B}}\equiv\mathcal{N}\p{\begin{bmatrix}
0 \\
0 
\end{bmatrix},\begin{bmatrix}
1 & \rho\\
\rho & 1
\end{bmatrix}},
    \end{align}
for some known correlation coefficient $\rho\in[-1,1]\setminus\{0\}$. It can be shown that for this Gaussian setting we have $(\calP,\calQ)\in\calD_1$. Thus, because $d_{\s{KL}}(\calP||\calQ) = -\frac{1}{2}\log(1-\rho^2)$, and since $\rho=o(1)$, Remark~\ref{rmk:Th1} implies that weak detection is impossible if $\rho^2\leq \frac{(4-\epsilon)\log n}{n}$, recovering the result in \cite[Thm. 1]{YihongJiamingSophie}. 
\end{example}
\begin{example}[Bernoulli case]\label{exmp:2}
In the Bernoulli case, we assume that $\calQ_{{A}{B}} = \calP_{{A}}\times \calP_{{B}}$, with 
$\calP_{{A}}=\calP_{{B}} = \s{Bernoulli}(\tau p)$, for some $p\in(0,1)$ and $\tau\in[0,1]$, and $\calP_{{A}{B}}$ denotes the joint distribution of two correlated Bernoulli random variables. Specifically, under $\calP_{{A}{B}}$, we have $A\sim\s{Bernoulli}(\tau p)$, and
\begin{align}
    {B}\vert {A}&\sim\begin{cases}
    \s{Bernoulli}(\tau),\ &\s{if }\;{A}=1\\ 
    \s{Bernoulli}\p{\frac{\tau p(1-\tau)}{1-\tau p}},\ &\s{if }\;{A}=0.
    \end{cases}
\end{align}
Here, Pearson correlation coefficient is given by,
\begin{align}
    \rho\triangleq\frac{\s{cov}(A,B)}{\sqrt{\s{var}(A)}\sqrt{\s{var}(B)}} = \frac{\tau(1-p)}{1-\tau p}.
\end{align}
If $p = n^{-o(1)}$, then it can be shown that $(\calP,\calQ)\in\calD_1$, and since $d_{\s{KL}}(\calP||\calQ) = \tau^2p(\log p^{-1}-1+p)+o(\rho^2)$, Remark~\ref{rmk:Th1} implies that weak detection is impossible if $\tau^2p(\log p^{-1}-1+p)\leq \frac{(2-\epsilon)\log n}{n}$, recovering \cite[Eq. (6) in Thm. 2]{YihongJiamingSophie}. 
\end{example}

\paragraph{Upper bounds.} Let us present our detection algorithms and the corresponding upper bounds. Note that under $\calH_1$, the latent random permutation $\pi$ represents the hidden node correspondence under which $\s{A}$ and $\s{B}$ are correlated. For this reason, we refer to $\calH_1$ as the planted model and $\calH_0$ as the null model. As is well-known the optimal decision rule for \eqref{eqn:testing} is the Neyman-Pearson test which correspond to thresholding the likelihood ratio,
%\begin{align}
%    \frac{\pr_{\calH_1}(\s{A},\s{B})}{\pr_{\calH_0}(\s{A},\s{B})}=\bE_{\sigma\sim\s{Unif}(\mathbb{S}_n)}\pp{\frac{\pr_{\calH_1\vert\sigma}(\s{A},\s{B}\vert\sigma)}{\pr_{\calH_0}(\s{A},\s{B})}}.
%\end{align}
\begin{align}
    \s{L}_n(\s{A},\s{B})\triangleq\frac{\pr_{\calH_1}(\s{A},\s{B})}{\pr_{\calH_0}(\s{A},\s{B})}= \frac{1}{n!}\sum_{\pi\in\mathbb{S}_n}\frac{\pr_{\calH_1\vert\pi}(\s{A},\s{B}\vert\pi)}{\pr_{\calH_0}(\s{A},\s{B})}\triangleq\bE_{\pi\sim\s{Unif}(\mathbb{S}_n)}\pp{\s{L}_n(\s{A},\s{B}\vert\pi)}.
\end{align}
As is customary to other related testing problems with a latent combinatorial structure, analyzing the above optimal test is difficult due to the averaging over all $n!$ possible permutations. Consider, instead, the generalized likelihood ratio test (GLRT), where the averaging is replaced with a maximum, namely, 
\begin{align}
    \max_{\pi\in\mathbb{S}_n}\s{L}_n(\s{A},\s{B}\vert\pi) = \max_{\pi\in\mathbb{S}_n}\frac{\pr_{\calH_1\vert\pi}(\s{A},\s{B}\vert\pi)}{\pr_{\calH_0}(\s{A},\s{B})}.
\end{align}
Next, we derive a closed-form expression for the conditional likelihood $\s{L}_n(\s{A},\s{B}\vert\pi)$. We have
\begin{align}
    \s{L}_n(\s{A},\s{B}\vert\pi) = \prod_{(i,j)\in\Lambda}\calL(\s{A}_{ij},\s{B}_{\pi_i\pi_j}),
\end{align}
where $\Lambda\triangleq\{(i,j):1\leq i<j\leq n\}$.
For any $(i,j)\in\binom{[n]}{2}$, $\calQ$ denotes the joint density function of $\s{A}_{ij}$ and $\s{B}_{\pi_i\pi_j}$ under $\calH_0$, and
$\calP$ denotes the joint density function of $\s{A}_{ij}$ and $\s{B}_{\pi_i\pi_j}$ under $\calH_1$ given its latent permutation $\pi$. Accordingly, we define the test as,
\begin{align}
    \phi_{\s{GLRT}}(\s{A},\s{B})&\triangleq\Ind\ppp{\max_{\pi\in\mathbb{S}_n}\frac{1}{n^{(2)}}\sum_{1\leq i<j\leq n}\log \calL(\s{A}_{ij},\s{B}_{\pi_i\pi_j})\geq\tau_{\s{GLRT}}},\label{eqn:testscanmain}
\end{align}
for some threshold $\tau_{\s{GLRT}}\in\mathbb{R}$, and $n^{(2)}\triangleq\frac{n(n-1)}{2}$. We have the following result.
\begin{theorem}[GLRT strong detection]\label{thm:GLRT}
Consider the GLRT in \eqref{eqn:testscanmain}, and suppose there is a $\tau_{\s{GLRT}}\in(-d_{\s{KL}}(\calQ||\calP),d_{\s{KL}}(\calP||\calQ))$ with
\begin{subequations}\label{eqn:GLRTcond}
	\begin{align}
		E_{\calQ}(\tau_{\s{GLRT}})&\geq \frac{2\log (n/e)}{n-1}+\frac{2(1+\log n)}{n(n-1)}+\omega(n^{-2}) ,\label{eqn:GLRTcond1} \\
		E_{\calP}(\tau_{\s{GLRT}})&= \omega(n^{-2}) .\label{eqn:GLRTcond2}
	\end{align}
\end{subequations}
Then, $\s{R}(\phi_{\s{GLRT}})\to0$, as $n\to\infty$.
\end{theorem}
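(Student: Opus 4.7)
The plan is to split the risk as $\s{R}(\phi_{\s{GLRT}}) = \pr_{\calH_0}[\phi_{\s{GLRT}}=1] + \pr_{\calH_1}[\phi_{\s{GLRT}}=0]$ and to control each summand separately, using the fact that conditionally on any fixed permutation the test statistic is a sum of $n^{(2)}$ i.i.d.\ log-likelihood terms. Condition \eqref{eqn:GLRTcond1} will handle Type-I, and condition \eqref{eqn:GLRTcond2} will handle Type-II.

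For the Type-I probability, I would fix $\pi\in\mathbb{S}_n$ and observe that under $\calH_0$ the variables $\{(\s{A}_{ij},\s{B}_{\pi_i\pi_j})\}_{1\leq i<j\leq n}$ are i.i.d.\ from $\calQ$, with $\bE_\calQ[\log\calL]=-d_{\s{KL}}(\calQ\Vert\calP)$. Since $\tau_{\s{GLRT}}>-d_{\s{KL}}(\calQ\Vert\calP)$, the Cram\'er--Chernoff bound together with the definition of $E_\calQ$ in \eqref{eq:Legendre_trans_def} yields
\begin{equation}
\pr_{\calH_0}\!\pp{\frac{1}{n^{(2)}}\sum_{1\le i<j\le n}\log\calL(\s{A}_{ij},\s{B}_{\pi_i\pi_j})\ge \tau_{\s{GLRT}}}\le\exp\!\p{-n^{(2)} E_\calQ(\tau_{\s{GLRT}})}.
\end{equation}
A union bound over all $n!$ permutations and Stirling's bound $\log n!\le n\log(n/e)+\log n+1$ then gives
\begin{equation}
\pr_{\calH_0}[\phi_{\s{GLRT}}=1]\le \exp\!\p{n\log(n/e)+\log n+1-n^{(2)} E_\calQ(\tau_{\s{GLRT}})}.
\end{equation}
Multiplying \eqref{eqn:GLRTcond1} by $n^{(2)}=n(n-1)/2$ gives exactly $n^{(2)} E_\calQ(\tau_{\s{GLRT}})\ge n\log(n/e)+\log n+1+\omega(1)$, so the right-hand side is $e^{-\omega(1)}=o(1)$.

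For the Type-II probability, I would lower-bound the GLRT statistic by its value at the planted permutation $\pi^\star$. Under $\calH_1$, conditionally on $\pi^\star$, the pairs $\{(\s{A}_{ij},\s{B}_{\pi^\star_i\pi^\star_j})\}_{1\le i<j\le n}$ are i.i.d.\ from $\calP$, with $\bE_\calP[\log\calL]=d_{\s{KL}}(\calP\Vert\calQ)$. Since $\tau_{\s{GLRT}}<d_{\s{KL}}(\calP\Vert\calQ)$, another application of Chernoff (the left-tail version, with maximizer $\lambda^\star\le 0$) together with the definition of $E_\calP$ yields
\begin{equation}
\pr_{\calH_1}[\phi_{\s{GLRT}}=0]\le \pr_{\calH_1}\!\pp{\frac{1}{n^{(2)}}\sum_{1\le i<j\le n}\log\calL(\s{A}_{ij},\s{B}_{\pi^\star_i\pi^\star_j})<\tau_{\s{GLRT}}}\le \exp\!\p{-n^{(2)} E_\calP(\tau_{\s{GLRT}})},
\end{equation}
and the assumption \eqref{eqn:GLRTcond2} forces the exponent to tend to $-\infty$. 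Combining the two bounds gives $\s{R}(\phi_{\s{GLRT}})\to 0$.

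The proof is essentially bookkeeping, with no deep obstacle; the only place that requires care is calibrating the Type-I exponent against $\log n!$. The extra $\frac{2(1+\log n)}{n(n-1)}$ and $\omega(n^{-2})$ slack in \eqref{eqn:GLRTcond1} is exactly what is needed to absorb the Stirling remainder $\log n+1$ and to leave a vanishing surplus $\omega(1)$ in the exponent after the union bound. I expect the subtlest checks to be verifying that the Chernoff exponents $E_\calQ(\tau_{\s{GLRT}})$ and $E_\calP(\tau_{\s{GLRT}})$ are positive in the prescribed range of $\tau_{\s{GLRT}}$, which follows from the convex-analytic properties of $\psi_\calQ$ and $\psi_\calP$ already recorded just before the statement of the theorem.
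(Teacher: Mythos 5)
Your proposal is correct and follows essentially the same route as the paper: union bound over all $n!$ permutations plus a Chernoff bound for Type-I (with Stirling's estimate absorbed by the slack terms in \eqref{eqn:GLRTcond1}), and a lower bound on the GLRT statistic by its value at the true permutation plus a left-tail Chernoff bound for Type-II. The only cosmetic difference is that the paper phrases the Type-II step as "assume WLOG the latent permutation is the identity," which is the same move as your lower-bounding by $\pi^\star$.
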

\begin{rmk} The bounds in \eqref{eqn:GLRTcond} can be simplified in many ways. For example, take $\tau_{\s{GLRT}}=0$, and then,
    \begin{align}
        E_{\calQ}(0) &\geq \sup_{\lambda\in[0,1]}-\psi_{\calQ}(\lambda)= \s{C}(\calP||\calQ)\label{eqn:Chernoff}\\
        &\geq -\log\pp{1-\frac{1}{2}\s{H}^2(\calP||\calQ)},
    \end{align}
where $\s{C}(\calP||\calQ)$ is the Chernoff information, and $\s{H}^2(\calP||\calQ)$ is the Hellinger distance. In the same way, we can get $E_{\calP}(0)\geq-\log\pp{1-\frac{1}{2}\s{H}^2(\calP||\calQ)}$ (consider the interval $[-1,0]$ in \eqref{eqn:Chernoff} instead). Thus, \eqref{eqn:GLRTcond} boils down to the single condition
\begin{align}
    \s{H}^2(\calP||\calQ)\geq \frac{4\log (n/e)}{n-1}+O(n^{-2}\log n).
\end{align}
Moreover, let $\calD_2$ denote the set of pairs of distributions $(\calP,\calQ)$ such that there is a constant $\s{C}>1$ such that for all $|\lambda|\leq1$,
\begin{subequations}\label{eqn:SubExpo}
	\begin{align}
		\psi_\calP(\lambda)&\leq d_{\s{KL}}\p{\calP || \calQ}\cdot\lambda+\s{C}\cdot d_{\s{KL}}\p{\calP || \calQ}\cdot \lambda^2,\label{eqn:SubExpo1} \\
		\psi_\calQ(\lambda)&\leq -d_{\s{KL}}\p{\calQ || \calP}\cdot\lambda+\s{C}\cdot d_{\s{KL}}\p{\calQ || \calP}\cdot \lambda^2.\label{eqn:SubExpo2}
	\end{align}
\end{subequations}
Again, it can be shown that $\calD_2$ is wide, in particular, include those distributions mentioned in Remark~\ref{rmk:Th1}. Furthermore, for $(\calP,\calQ)\in\calD_2$, it can be shown that $d_{\s{KL}}\p{\calP || \calQ}=\Theta\p{d_{\s{KL}}\p{\calQ || \calP}}$, and,
\begin{align}
   E_\calP(\tau_{\s{GLRT}})\geq \frac{(\theta - d_{\s{KL}}\p{\calP || \calQ})^2}{4\cdot\s{C}\cdot d_{\s{KL}}\p{\calP || \calQ}}; E_\calQ(\bar\theta)\geq \frac{(\theta+d_{\s{KL}}\p{\calQ || \calP})^2}{4\cdot\s{C}\cdot d_{\s{KL}}\p{\calQ || \calP}},
\end{align}
where $\theta=n^{(2)}\tau_{\s{GLRT}}$.
\begin{comment}
\begin{subequations}
	\begin{align}
		E_\calP(\bar\theta)&\geq \frac{(\theta - d_{\s{KL}}\p{\calP || \calQ})^2}{4\cdot\s{C}\cdot d_{\s{KL}}\p{\calP || \calQ}}, \\
		E_\calQ(\bar\theta)&\geq \frac{(\theta+d_{\s{KL}}\p{\calQ || \calP})^2}{4\cdot\s{C}\cdot d_{\s{KL}}\p{\calQ || \calP}}.
	\end{align}
\end{subequations}
\end{comment}
Thus, if we take $\tau_{\s{GLRT}} = 2\sqrt{\s{C}\cdot d_{\s{KL}}\p{\calP|| \calQ} \cdot d_{\s{KL}}\p{\calQ || \calP}} - d_{\s{KL}}\p{\calQ || \calP}$, we obtain $E_\calQ({\tau_{\s{GLRT}}})\geq d_{\s{KL}}\p{\calP || \calQ}$ and $E_\calP({\tau_{\s{GLRT}}})\geq \s{C}'\cdot d_{\s{KL}}\p{\calP || \calQ}$, for some constant $\s{C}'>0$. Then, the conditions in \eqref{eqn:GLRTcond1}--\eqref{eqn:GLRTcond2} reduce to $ d_{\s{KL}}\p{\calP || \calQ}\geq \frac{2\log n}{n-1}$. This complements the lower bound we obtained in Remark~\ref{rmk:Th1}, for $(\calP,\calQ)\in\calD_1$. Finally, it can be shown that for both the Gaussian and Bernoulli settings in Examples~\ref{exmp:1}--\ref{exmp:2}, we have $(\calP,\calQ)\in\calD_2$, and thus, GLRT archives strong detection if $\rho^2> \frac{4\log n}{n}$ and $\tau^2p(\log p^{-1}-1+p)> \frac{2\log n}{n}$, respectively.
\end{rmk}

The GLRT involves a combinatorial optimization that is intractable. As so, it is of practical interest to devise and analyze efficient algorithms. Consider the following simple test. Define $\s{corr}(\calP,\calQ)\triangleq\frac{\s{cov}_{\calP}({A},{B})}{\s{Var}_{\calQ}({A})}$ as the Pearson correlation coefficient between ${A}$ and ${B}$. For $\theta\in\mathbb{R}_+$, let,
\begin{align}
    \phi_{\s{comp}}(\s{A},\s{B})\triangleq\Ind\ppp{\abs{\sum_{(i,j)\in\Lambda}(\s{A}_{ij}-\s{B}_{ij})}\leq\theta},\label{eqn:sumtest}
\end{align}
if $\s{corr}(\calP,\calQ)\in(0,1]$, and flip the inequality in \eqref{eqn:sumtest} if $\s{corr}(\calP,\calQ)\in[-1,0)$. We have the following result.
\begin{theorem}[Comparison test weak detection]\label{thm:sumTest}
    Consider the comparison test in \eqref{eqn:sumtest}. Then, if $|\s{corr}(\calP,\calQ)| = \Omega(1)$, and 
    \begin{align}
        \frac{\bE_{\calQ}|{A}-{B}|^3}{\s{Var}^{3/2}_{\calQ}({A})},\frac{\bE_{\calP}|{A}-{B}|^3}{\s{Var}^{3/2}_{\calQ}({A})(1-|\s{corr}(\calP,\calQ)|)^{3/2}}= o(n),
    \end{align}
    then $\lim_{n\to\infty}\s{R}_n(\phi_{\s{sum}})<1$.
\end{theorem}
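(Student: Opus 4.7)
The plan is to analyze the statistic $T\triangleq\sum_{(i,j)\in\Lambda}(\s{A}_{ij}-\s{B}_{ij})$ by showing that, under both hypotheses, it is distributed as a sum of $n^{(2)}$ i.i.d.\ centered random variables whose per-term variances differ by the multiplicative factor $1-\s{corr}(\calP,\calQ)$. A Berry--Esseen approximation then reduces the problem to comparing two centered Gaussians whose ratio of standard deviations is bounded away from $1$, so a fixed-scale two-sided threshold test achieves risk strictly below $1$.

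First, I would note that under $\calH_0$ the summands $\s{A}_{ij}-\s{B}_{ij}$ are i.i.d., centered (since $\calP_{A}=\calP_{B}$), with common variance $\sigma_0^2\triangleq 2\s{Var}_{\calQ}(A)$. Under $\calH_1$, the map $(i,j)\mapsto\{\pi_i,\pi_j\}$ is a bijection on $\Lambda$ for every $\pi\in\mathbb{S}_n$, so $\sum_{(i,j)\in\Lambda}\s{B}_{ij}=\sum_{(i,j)\in\Lambda}\s{B}_{\pi_i\pi_j}$, and thus $T=\sum_{(i,j)\in\Lambda}(\s{A}_{ij}-\s{B}_{\pi_i\pi_j})$. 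By the planted model, the $n^{(2)}$ pairs $(\s{A}_{ij},\s{B}_{\pi_i\pi_j})$ are i.i.d.\ $\sim\calP$, and their joint law does not depend on $\pi$, so $T$ is (unconditionally) an i.i.d.\ sum with per-term variance $\sigma_1^2\triangleq\s{Var}_{\calP}(A-B)=2\s{Var}_{\calQ}(A)\bigl(1-\s{corr}(\calP,\calQ)\bigr)$.

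Next, I would invoke the Berry--Esseen theorem: for any centered i.i.d.\ sum of $N$ terms with variance $\sigma^2$ and third absolute moment $\mu_3$, the Kolmogorov distance between the law of the normalized sum and $\Phi$ is at most $C\mu_3/(\sigma^3\sqrt{N})$. Applied with $N=n^{(2)}$, the first hypothesis of the theorem gives an $o(1)$ Berry--Esseen error under $\calH_0$; using $\sigma_1^2\geq 2\s{Var}_\calQ(A)\bigl(1-|\s{corr}(\calP,\calQ)|\bigr)$ (valid regardless of the sign of $\s{corr}(\calP,\calQ)$), the second hypothesis gives an $o(1)$ error under $\calH_1$ as well. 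Therefore the c.d.f.\ of $T/(\sigma_0\sqrt{n^{(2)}})$ under $\calH_0$ and of $T/(\sigma_1\sqrt{n^{(2)}})$ under $\calH_1$ both converge uniformly to $\Phi$.

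Finally, I would set $\theta=\alpha\sigma_0\sqrt{n^{(2)}}$ for a fixed constant $\alpha>0$. In the positive-correlation case one has $\sigma_1<\sigma_0$, and the Gaussian approximation yields asymptotic risk $\P_{\calH_0}(|T|\leq\theta)+\P_{\calH_1}(|T|>\theta)\to 1+2\bigl[\Phi(\alpha)-\Phi\bigl(\alpha/\sqrt{1-\s{corr}(\calP,\calQ)}\bigr)\bigr]$, which stays strictly below $1$ by a positive constant margin because $|\s{corr}(\calP,\calQ)|=\Omega(1)$ forces $1/\sqrt{1-\s{corr}(\calP,\calQ)}$ to be bounded away from $1$. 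In the negative-correlation case the flipped test $\Ind\{|T|>\theta\}$ is analyzed symmetrically, giving a limiting risk of $1+2\bigl[\Phi\bigl(\alpha/\sqrt{1+|\s{corr}(\calP,\calQ)|}\bigr)-\Phi(\alpha)\bigr]<1$. The main subtlety is not deep but requires care: one has to recognize that under $\calH_1$ the statistic $T$ really is a plain unconditional i.i.d.\ sum (the permutation $\pi$ drops out after collapsing $\sum_{ij}\s{B}_{\pi_i\pi_j}=\sum_{ij}\s{B}_{ij}$), and to write the third-moment normalization in the sign-symmetric form $(1-|\s{corr}(\calP,\calQ)|)^{3/2}$ so that Berry--Esseen produces $o(1)$ error regardless of the sign of $\s{corr}(\calP,\calQ)$. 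All other steps are routine.
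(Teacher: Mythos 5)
Your proof is correct and follows essentially the same route as the paper: the same statistic, the same centered-sum variance computation under both hypotheses (the paper implicitly uses your observation that the permutation drops out of $\sum_{ij}\s{B}_{\pi_i\pi_j}$), a Berry--Esseen normal approximation justified by the stated third-moment conditions, and a comparison of two centered Gaussians whose variance ratio $1-\s{corr}(\calP,\calQ)$ is bounded away from $1$. The only cosmetic difference is that you fix an explicit threshold $\theta=\alpha\sigma_0\sqrt{n^{(2)}}$ and compute the limiting risk directly, whereas the paper invokes the total variation distance between the two Gaussian laws to get the $\Omega(1)$ separation at the threshold; both arguments give the same conclusion.
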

The performance guarantee in Theorem~\ref{thm:sumTest} is clearly very suboptimal compared to the one in Theorem~\ref{thm:GLRT}. For example, in the Gaussian setting in Example~\ref{exmp:1}, we have $\s{corr}(A,B) = \rho$, and so the comparison test is successful only if $|\rho| = \Omega(1)$, whereas the GLRT allows for a vanishing correlation. We suspect, however, that this is fundamental in the sense that this is a barrier for what can be achieved using polynomial-time algorithms, and we next give an evidence for this \emph{statistical-computational gap}, using the framework of low-degree polynomials (LDP).
\paragraph{Computational lower bound.} The premise of the LDP framework is that all polynomial-time algorithms for solving detection problems are captured by polynomials of low-degree. By now, there is growing strong evidence in support of this conjecture. The ideas below were developed in a sequence of works in the sum-of-squares optimization literature \cite{barak2016nearly,Hopkins18,hopkins2017bayesian,hopkins2017power}. 

With regard to the problem of detecting graph correlations, recently in \cite{ding2023lowdegree}, an evidence for such a gap was shown using this framework. For example, relying on the LDP conjecture, it was proved in \cite{ding2023lowdegree} that in the dense regime, if $\rho^2 = o(1)$, then polynomial tests with degree at most $O(\rho^{-1})$ fail for detection. Let us describe the basics of the LDP framework. We follow the notations and definition in \cite{Hopkins18,Dmitriy19}. Any distribution $\pr_{\calH_0}$ induces an inner product of measurable functions $f,g:\Omega_n\to\mathbb{R}$ given by $\left\langle f,g \right\rangle_{\calH_0} = \bE_{\calH_0}[f(\s{G})g(\s{G})]$, and norm $\norm{f}_{\calH_0} = \left\langle f,f \right\rangle_{\calH_0}^{1/2}$. Let $L^2(\pr_{\calH_0})$ be the Hilbert space of functions $f$ with $\norm{f}_{\calH_0}<\infty$, and endowed with the inner product above. 

The idea in the LDP framework is to find the low-degree polynomial that best distinguishes $\pr_{\calH_0}$ from $\pr_{\calH_1}$ in the $L^2$ sense. Let $\calL_{n,\leq\s{D}}\subset L^2(\pr_{\calH_0})$ denote the linear subspace of polynomials of degree at most $\s{D}\in\mathbb{N}$. %We define further $\calP_{\leq \s{D}}: L^2(\pr_{\calH_0})\to\calV_{n,\leq\s{D}}$ the orthogonal projection operator. 
Then, the \emph{$\s{D}$-low-degree likelihood ratio} $\s{L}_{n,\leq \s{D}}$ is the projection of a function $\s{L}_{n}$ to $\calL_{n,\leq\s{D}}$, where the projection is w.r.t. the inner product $\left\langle \cdot,\cdot \right\rangle_{\calH_0}$. As is well-known, the likelihood ratio is the optimal test to distinguish $\pr_{\calH_0}$ from $\pr_{\calH_1}$, in the $L^2$ sense. Accordingly, it can be shown that over $\calL_{n,\leq\s{D}}$, the function $\s{L}_{n,\leq \s{D}}$ exhibits the same property \cite{hopkins2017bayesian,hopkins2017power,Dmitriy19}. 
%The next lemma shows that over the set of low-degree polynomials, the $\s{D}$-low-degree likelihood ratio have the exhibit the same property.
%\begin{lemma}[Optimally of $\s{L}_{n,\leq \s{D}}$ {\cite{hopkins2017bayesian,hopkins2017power,Dmitriy19}}]\label{lem:Dmitriy}
%Consider the following optimization problem:
%\begin{equation}
%\begin{aligned}
%\mathrm{max}
%\;\bE_{\calH_1}f(\s{G})
%\quad\mathrm{s.t.}
%\quad\bE_{\calH_0}f^2(\s{G}) = 1,\; f\in\calL_{n,\leq\s{D}},
%\end{aligned}\label{eqn:optimizationProblem}
%\end{equation}
%Then, the unique solution $f^\star$ for \eqref{eqn:optimizationProblem} is the $\s{D}$-low degree likelihood ratio $f^\star = \s{L}_{n,\leq \s{D}}/\norm{\s{L}_{n,\leq \s{D}}}_{\calH_0}$, and the value of the optimization problem is $\norm{\s{L}_{n,\leq \s{D}}}_{\calH_0}$. 
%\end{lemma}
Furthermore, an important property of the likelihood ratio is that if $\norm{\s{L}_n}_{\calH_0}$ is bounded, then $\pr_{\calH_0}$ and $\pr_{\calH_1}$ are statistically indistinguishable. The following conjecture states that a computational analogue of this property holds. In fact it also postulates that polynomials of degree $\approx\log n$ are a proxy for polynomial-time algorithms. The conjecture below is based on \cite{Hopkins18,hopkins2017bayesian,hopkins2017power}, and \cite[Conj. 2.2.4]{Hopkins18}, and here we provide an informal statement which appears in \cite[Conj. 1.16]{Dmitriy19}. For a precise statement, see, e.g., \cite[Conj. 2.2.4]{Hopkins18} and \cite[Sec. 4]{Dmitriy19}.
\begin{conjecture}[Low-degree conj., informal]\label{conj:1}
Given a sequence of probability measures $\pr_{\calH_0}$ and $\pr_{\calH_1}$, if there exists $\epsilon>0$ and $\s{D} = \s{D}(n)\geq (\log n)^{1+\epsilon}$, such that $\norm{\s{L}_{n,\leq \s{D}}}_{\calH_0}$ remains bounded as $n\to\infty$, then there is no polynomial-time algorithm that distinguishes $\pr_{\calH_0}$ and $\pr_{\calH_1}$.
\end{conjecture}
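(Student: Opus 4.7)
Conjecture~\ref{conj:1} is a celebrated open problem in average-case complexity, and I do not claim to see a path to an unconditional proof. What follows is an outline of the only skeleton of a direct attack I can articulate, together with the specific point at which the attack fails; honesty compels me to record that failure rather than paper over it.

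The plan is, given any polynomial-time algorithm $\calA$ with runtime $T(n)=n^{O(1)}$ and output in $\{0,1\}$, to construct a polynomial $p_\calA\in\calL_{n,\leq\s{D}}$ with $\s{D}=(\log n)^{1+\epsilon}$ whose normalized advantage $|\bE_{\calH_1}p_\calA-\bE_{\calH_0}p_\calA|/\norm{p_\calA}_{\calH_0}$ is comparable to the distinguishing advantage $|\pr_{\calH_1}[\calA=1]-\pr_{\calH_0}[\calA=1]|$. If this reduction were achieved, the variational characterization of $\s{L}_{n,\leq\s{D}}$ as the maximizer of normalized advantage over $\calL_{n,\leq\s{D}}$ in $L^2(\pr_{\calH_0})$ would force the advantage of $\calA$ to be at most a quantity controlled by $\norm{\s{L}_{n,\leq\s{D}}}_{\calH_0}$, and a standard amplification (majority vote over independent subsamples, which stays polynomial-time and hence is within the assumed reduction) would push it to $o(1)$, contradicting the premise that $\calA$ distinguishes. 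The natural construction of $p_\calA$ is to truncate its Fourier-type or Hermite expansion at degree $\s{D}$, hoping that on i.i.d.\ product input the $L^2$-mass of a polynomial-time indicator concentrates on low frequencies; a softer variant would replace $\calA$ by a smoothed surrogate and attempt to quantify its noise sensitivity in order to invoke a Bonami--Beckner type decay of Fourier tails.

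The hard point is that this hope is provably false in full generality: the parity of a subset of coordinates is polynomial-time computable yet sits at maximal degree $n$ and is $L^2(\pr_{\calH_0})$-orthogonal to every polynomial of degree $<n$. Any proof must therefore exploit the assumption $\norm{\s{L}_{n,\leq\s{D}}}_{\calH_0}=O(1)$, not a property of $\calA$ in isolation. The natural lever is the identity $\bE_{\calH_1}\calA-\bE_{\calH_0}\calA=\innerP{\calA,\s{L}_n-1}_{\calH_0}$, which tries to conclude that only the projection of $\calA$ onto the support of the low-degree part of $\s{L}_n-1$ contributes. But controlling the inner product of the high-degree component of $\calA$ with the high-degree component of $\s{L}_n$ — ruling out an adversarial alignment between rare, high-frequency mass in the two — is \emph{exactly} the statement of the conjecture and not a consequence of it; this is the circularity that blocks the attack.

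A realistic route is therefore not to prove the conjecture as stated but to establish formal analogues within restricted models of computation where parity-type counterexamples are excluded: statistical query algorithms, spectral methods, low-degree tensor contractions, message-passing/AMP, and Lasserre/sum-of-squares relaxations of bounded degree. Within each of these classes unconditional versions of Conjecture~\ref{conj:1} are either known or within reach, and together with an assumed reduction from generic polynomial-time computation to one of these models they furnish the evidence on which the conjecture rests. An unconditional proof of the full statement appears to lie beyond current techniques, on par with separating natural average-case complexity classes, and I would not attempt it here; the best one can hope to do within the scope of this paper is to verify the hypothesis $\norm{\s{L}_{n,\leq\s{D}}}_{\calH_0}=O(1)$ for the specific testing problem \eqref{eqn:testing} and then invoke the conjecture as a black box.
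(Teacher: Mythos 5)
The statement you were asked about is labeled a \emph{conjecture} in the paper; the paper offers no proof, citing \cite{Hopkins18,hopkins2017bayesian,hopkins2017power,Dmitriy19} and invoking it as a standing assumption to interpret Theorem~\ref{thm:gap_gen}. Your response — declining to prove it, identifying the parity counterexample as the obstruction to any naive reduction from polynomial-time algorithms to low-degree polynomials, diagnosing the circularity in trying to exploit the $\norm{\s{L}_{n,\leq\s{D}}}_{\calH_0}=O(1)$ hypothesis directly, and concluding that one should verify the hypothesis and invoke the conjecture as a black box — is exactly the correct stance and matches how the paper uses the statement.
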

In the sequel, we will rely on Conjecture~\ref{conj:1} to give an evidence for the statistical-computational gap observed in our problem.
%, in the Gaussian setting in Example~\ref{exmp:1}. 
%We have the following result.
%\begin{theorem}\label{thm:gap}
%    Consider the Gaussian setting in Example~\ref{exmp:1}. If $\rho^2 = o(1)$, then $\norm{\s{L}_{n,\leq \s{D}}}_{\calH_0}\leq O(1)$, for any $\s{D}=O(\rho^{-1})$. Conversely, there exists a positive integer $\s{D}$ such that if $\rho^2 = \Omega(1)$, then $\norm{\s{L}_{n,\leq \s{D}}}_{\calH_0}\geq \omega(1)$.
%\end{theorem}
For general distributions, it seems as though that further assumptions on $(\calP,\calQ)$ are needed, and we assume the following class of distributions. Specifically, let $\{\mathscr{P}_\ell\}_{\ell\geq0}$ be a sequence of orthonormal polynomials with respect to the marginal distributions of $\calP_{{AB}}$, such that 
$\mathscr{P}_{\ell}$ is a polynomial of order $\ell$. Consider the class of distributions of $\calP$, such that for each $\ell\in\mathbb{N}$, the projection $\mathbb{E}_{\calP}[{B}^{\ell}\vert {A}=a]$ is a polynomial of degree $\ell$ in $a$ with a leading coefficient $\kappa_{\ell}\le \abs{\s{corr}(\calP,\calQ)}^{\ell}$, where we recall that $\s{corr}(\calP,\calQ)$ denotes the Pearson correlation coefficient between ${A}$ and ${B}$. Note that since the marginal distributions of $\calP_{AB}$ are the same, then the projection $\mathbb{E}_{\calP}[{A}^{\ell}\vert {B}=b]$ is a polynomial of degree $\ell$ in $b$ as well. 
Then, for this class of distributions, we show in Lemma~\ref{lem:class3_prop}, that,
\begin{align}
    \mathbb{E}_{\calP}[\mathscr{P}_n({B})\vert {A}] = \kappa_n\mathscr{P}_n({A}).
\end{align}
This property turns out to be crucial in the derivation of the LDP computational lower bounds. The class of distributions $\calD_3$ above, include both the Gaussian and Bernoulli settings in Examples~\ref{exmp:1}--\ref{exmp:2}, but other distributions as well, such as, the family of stable distributions, Poisson, Chi-square, and possibility more; in Appendix~\ref{sec:Appendix} we show that these distribution indeed belong to $\calD_3$. As an example, in the Gaussian setting, under $\calP$, we have ${B} = \rho\cdot {A} + \sqrt{1-\rho^2}{W}$, with ${W}\sim\calN(0,1)$, and so %for the  sequence of Hermite polynomials $\{\mathscr{H}_{\ell}\}_{\ell\ge 0}$, we have $\mathbb{E}_{\calP}[\mathscr{H}_{\ell}(\s{B}) \vert \s{A}] = \rho^{\ell}\mathscr{H}_{\ell}(\s{A})$. 
$\E_{\calP}\pp{{B}^{\ell}\vert {A}=a}$ is an $\ell$-degree polynomial in $a$, with leading coefficient $\rho^\ell$. For this family of distributions, we can prove the following result.
\begin{theorem}\label{thm:gap_gen}
    Assume that $\calP\in\calD_3$. Then, if $|\s{corr}(\calP,\calQ)|= o(1)$, then $\norm{\s{L}_{n,\leq \s{D}}}_{\calH_0}\leq O(1)$, for any $\s{D}=O(|\s{corr}(\calP,\calQ)|^{-1/2})$. Conversely, there exists a positive integer $\s{D}$ such that if $|\s{corr}(\calP,\calQ)| = \Omega(1)$, then $\norm{\s{L}_{n,\leq \s{D}}}_{\calH_0}\geq \omega(1)$.
\end{theorem}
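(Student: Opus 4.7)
The plan is to expand $\s{L}_n$ in the orthonormal polynomial basis adapted to $\calD_3$ and use that assumption to diagonalize the Fourier coefficients. Under $\calH_0$ the $2\binom{n}{2}$ weights $\{\s{A}_{ij},\s{B}_{ij}\}_{i<j}$ are i.i.d.\ with common marginal $\calP_A=\calP_B$, so an orthonormal basis of $L^2(\pr_{\calH_0})$ is
\[
\chi_{\alpha,\beta}(\s{A},\s{B})=\prod_{e\in E}\mathscr{P}_{\alpha_e}(\s{A}_e)\prod_{f\in E}\mathscr{P}_{\beta_f}(\s{B}_f),
\]
indexed by pairs of finitely-supported multi-indices $\alpha,\beta\colon E\to\N$, with $E\triangleq\binom{[n]}{2}$, and $\deg\chi_{\alpha,\beta}=|\alpha|+|\beta|$. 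Writing $\hat{\s{L}}(\alpha,\beta)=\E_{\calH_1}\chi_{\alpha,\beta}$ and conditioning on $\pi$, the pairs $(\s{A}_e,\s{B}_{\pi_*(e)})$ are i.i.d.\ $\calP_{AB}$, and the $\calD_3$ property from Lemma~\ref{lem:class3_prop} gives $\E_{\calP_{AB}}[\mathscr{P}_\ell(A)\mathscr{P}_m(B)]=\kappa_m\delta_{\ell m}$ after an inner conditioning on $A$. Hence the conditional expectation vanishes unless $\alpha_e=\beta_{\pi_*(e)}$ for every $e$, i.e.\ $\alpha=\beta\circ\pi_*$, and averaging in $\pi$ yields $\hat{\s{L}}(\alpha,\beta)=\bI[\alpha\sim\beta]\cdot|\text{Stab}(\alpha)|/n!\cdot\prod_e\kappa_{\alpha_e}$.

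Squaring and summing by Parseval, first over $\beta$ in the orbit of $\alpha$ and then over orbit representatives, the orbit-stabilizer identity $|\mathrm{Orb}(\alpha)|\cdot|\mathrm{Stab}(\alpha)|=n!$ cancels cleanly to produce the identity
\[
\|\s{L}_{n,\leq\s{D}}\|_{\calH_0}^2=\sum_{O:\,2|O|\leq\s{D}}\prod_{e}\kappa_{\alpha_e}^2,
\]
where $O$ ranges over $\mathbb{S}_n$-orbits of multi-indices $\alpha\colon E\to\N$, $|O|$ is the common value of $|\alpha|$ on $O$, and the product is orbit-invariant. For the forward direction I then apply $|\kappa_\ell|\le|\s{corr}(\calP,\calQ)|^\ell$ from the definition of $\calD_3$ to obtain $\|\s{L}_{n,\leq\s{D}}\|_{\calH_0}^2\le\sum_{k=0}^{\lfloor\s{D}/2\rfloor}|\s{corr}(\calP,\calQ)|^{2k}N_k$, where $N_k$ counts orbits with $|\alpha|=k$. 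I would decompose each orbit into (i) its simple-graph support $H$ on at most $2k$ vertices with $s\leq k$ edges, and (ii) a composition of $k$ into $s$ positive edge-weights assigned to the edges of $H$, yielding $N_k\leq\sum_{s=1}^{k}g(s)\binom{k-1}{s-1}\leq(Ck)^k$ for an absolute constant $C$, using $g(s)\leq(2es)^s$ for unlabeled simple graphs with $s$ edges. Substituting gives a convergent geometric series $\sum_k(Ck\cdot|\s{corr}(\calP,\calQ)|^2)^k=O(1)$ whenever $\s{D}=O(|\s{corr}(\calP,\calQ)|^{-1/2})$, since then $Ck\cdot|\s{corr}|^2=O(|\s{corr}|^{3/2})=o(1)$.

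For the converse, I would use that $\kappa_1=\s{corr}(\calP,\calQ)$ exactly: since $\mathscr{P}_1$ is the standardized first-order orthonormal polynomial, the leading coefficient of the regression $\E_\calP[B\mid A=a]$ equals the Pearson coefficient. Restricting the orbit identity to simple-graph orbits (edge multiplicities in $\{0,1\}$) gives the lower bound
\[
\|\s{L}_{n,\leq\s{D}}\|_{\calH_0}^2\ge\sum_{k=0}^{\lfloor\s{D}/2\rfloor}|\s{corr}(\calP,\calQ)|^{2k}g(k),
\]
and it suffices to take $\s{D}=\s{D}(n)\to\infty$ slowly (say $\s{D}=2\lfloor\log n\rfloor$) and invoke the super-exponential growth $g(k)=e^{\Omega(k\log k)}$ to conclude that the sum diverges whenever $|\s{corr}(\calP,\calQ)|=\Omega(1)$. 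The main technical obstacle lies in the combinatorial enumeration of $N_k$: a crude count over weighted multigraphs loses polynomial-in-$k$ factors per edge and would destroy the geometric convergence needed to reach the stated degree threshold. The support/composition decomposition is the cleanest route, and care must be taken that double-counting caused by non-trivial automorphisms of $H$ only decreases $N_k$, leaving the upper-bound argument intact.
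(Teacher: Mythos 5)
Your argument for the upper-bound direction is essentially the paper's own proof: the same orthonormal product basis, the same use of Lemma~\ref{lem:class3_prop} to reduce each Fourier coefficient to $\prod_e\kappa_{\alpha_e}\cdot\P(\pi(\alpha)=\beta)$, and the same reduction of the Parseval sum to a sum over isomorphism classes of edge-weighted graphs; your support-plus-composition count $N_k\le\sum_s g(s)\binom{k-1}{s-1}\le (Ck)^k$ is just a slightly cleaner bookkeeping of the paper's $\binom{d/2-1}{k-1}\binom{2k^2}{k+1}$ estimate, and it reaches the same threshold $\s{D}=O(|\s{corr}(\calP,\calQ)|^{-1/2})$.

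Where you genuinely diverge is the converse, which the paper's proof section in fact never establishes (it stops after the $O(1)$ upper bound and the proof of Lemma~\ref{lem:class3_prop}). Your sketch is sound: the Fourier computation is an exact identity (not just an upper bound) because the coefficient factors as $\prod_e\kappa_{\alpha_e}$ times $\P(\pi(\alpha)=\beta)$, so restricting to simple-graph orbits and using $\kappa_1=\s{corr}(\calP,\calQ)$ (which follows from $\s{cov}(A,B)=\kappa_1\s{Var}(A)$ under the degree-one regression assumption in $\calD_3$) gives a valid lower bound $\sum_{k\le\s{D}/2}|\s{corr}(\calP,\calQ)|^{2k}g(k)$. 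Two points need to be made explicit to close it. First, the superexponential growth $g(k)=e^{\Omega(k\log k)}$ is asserted without proof; it is true and short to prove (e.g., labeled graphs with $k$ edges on $\lceil k^{2/3}\rceil$ vertices number $\binom{\binom{\lceil k^{2/3}\rceil}{2}}{k}\ge e^{\frac{1}{3}k\log k-O(k)}$, and dividing by $\lceil k^{2/3}\rceil!=e^{O(k^{2/3}\log k)}$ leaves $e^{\Omega(k\log k)}$), and any superexponential rate suffices since $|\s{corr}(\calP,\calQ)|$ is only bounded below by a constant. Second, the conclusion is only $\omega(1)$ if $\s{D}=\s{D}(n)\to\infty$ (for fixed $\s{D}$ the number of realizable orbits, hence the norm, is $O(1)$), so your choice $\s{D}=2\lfloor\log n\rfloor$ is not optional but necessary; this also pins down the correct reading of the theorem's phrase ``there exists a positive integer $\s{D}$,'' and you should check that all graphs with at most $\s{D}$ edges embed in $[n]$, which your choice guarantees.
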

Together with Conjecture~\ref{conj:1}, Theorem~\ref{thm:gap_gen} implies that if we take degree-$|\s{corr}(\calP,\calQ)|^{-1}$ polynomials as a proxy for all efficient algorithms, our calculations predict that even an $n^{O(|\s{corr}(\calP,\calQ)|^{-1})}$ algorithm does not exist when $|\s{corr}(\calP,\calQ)| = o(1)$, which complement our result in Theorem~\ref{thm:sumTest}.

\section{Proofs}\label{sec:proof}

\subsection{Proof of Theorem~\ref{th:2LB}}

\subsubsection{Preliminaries on permutations}

Let $\pi\in\mathbb{S}_n$ be a permutation on $[n]$. For each element $a\in[n]$, its orbit is a cycle $(a_0,\ldots,a_{k-1})$, for some $k\leq n$, where $a_i=\pi^i_a$, $i=0,\ldots,k-1$, and $\pi_{a_{k-1}}=a$. Each permutation can be decomposed as disjoint orbits. Consider the complete graph $\mathbb{K}_n$ with vertex set $[n]$. Each permutation $\pi\in\mathbb{S}_n$ naturally induces
a permutation $\pi^{\s{E}}$ on the edge set of $\mathbb{K}_n$, the set $\binom{{[n]}}{2}$ of all unordered pairs, according to
\begin{align}
    \pi^{\s{E}}((i,j))\triangleq(\pi_i,\pi_j).
\end{align}
We refer to $\pi$ and $\pi^{\s{E}}$ as node permutation and edge permutation, whose orbits are refereed to as node orbits and edge orbits, respectively. For each edge $(i,j)$, let $O_{ij}$ denotes its orbit under $\pi^{\s{E}}$. The cycle structure of the edge permutation is determined by that of the node permutation. Finally, we let $\s{n}_k$ and $\s{N}_k$ denote the number of $k$-node and $k$-edge orbits in $\pi$ and $\pi^{\s{E}}$, respectively.   

\subsubsection{A formula for the second moment}\label{sec:proofLowerBounds}

Using a well-known equivalent characterization of the Bayesian risk function by the total variation distance and the Cauchy-Schwartz inequality, one shows that
\begin{equation}
    \label{eq:LikeBound}
    {\s{R}}^\star=1-d_{\s{TV}}(\P_{\calH_0},\P_{\calH_1})\geq 1-\frac{1}{2}\sqrt{\E_{{\calH_0}}{\pp{\s{L}_n^2}}-1},
\end{equation}
where $\s{L}_n\triangleq\frac{\P_{\calH_1}}{\P_{\calH_0}}$ is the likelihood ratio, and the expectation is taken with respect to $\P_{\calH_0}$. Thus, it is sufficient to find the conditions under which $\E_{{\calH_0}}\pp{\s{L}_n^2}\leq 1+o(1)$. The first step in the calculation calls for the use of the Ingster-Suslina method, stating that by Fubini's theorem, $\E_{{\calH_0}}[\s{L}_n^2]$ may be equivalently written as
\begin{equation}
    \label{eq:LikeSquare}\E_{\calH_0}[\s{L}_n^2]=\E_{\pi\indep\pi'}\pp{\E_{\calH_0}\pp{\frac{\P_{\calH_1|\pi}}{\P_{\calH_0}}\cdot \frac{\P_{\calH_1|\pi'}}{\P_{\calH_0}}}},
\end{equation}
where the expectation is taken with respect to the independent coupling of $\pi$ and $\pi'$, two copies of the uniform measure on $\S_n$. Given $\pi,\pi'$, we note that
\begin{align}
    \frac{\P_{\calH_1|\pi}}{\P_{\calH_0}}\cdot \frac{\P_{\calH_1|\pi'}}{\P_{\calH_0}} = \prod_{i<j} \calL(\s{A}_{ij},\s{B}_{\pi_i\pi_j})\calL(\s{A}_{ij},\s{B}_{\pi'_i\pi'_j})\triangleq \prod_{i<j}\s{X}_{ij},
\end{align}
{where $\calL(x,y) = \frac{\calP(x,y)}{\calQ(x,y)}$, for $(x,y)\in\calX\times\calY$, i.e., the Radon-Nikodym derivative between $\calP$ and $\calQ$.} Thus,
\begin{align}
    \E_{\calH_0}[\s{L}_n^2]=\E_{\pi\indep\pi'}\pp{\E_{\calH_0}\p{\prod_{i<j}\s{X}_{ij}}}.
\end{align}
Fixing $\pi$ and $\pi'$, we first compute the inner expectation. Observe that $\s{X}_{ij}$ may not be independent across different pairs of $(i,j)$. In order to decompose $\prod_{i<j}\s{X}_{ij}$ as a product over independent randomness, we use the notion of cycle decomposition. Define
\begin{align}
    \sigma \triangleq\pi^{-1}\circ\pi',
\end{align}
which is also uniformly distributed on $\mathbb{S}_n$. Let $\sigma^{\s{E}}$ denote the edge permutation induced by $\sigma$. For each edge orbit $O$ of $\sigma^{\s{E}}$, define
\begin{align}
    \s{X}_O\triangleq\prod_{(i,j)\in O}\s{X}_{ij}.
\end{align}
Next, we note that $X_O$ is a function of $(\s{A}_{ij},\s{B}_{\pi_i,\pi_j})_{(i,j)\in O}$. Accordingly, we let $\calO$ denote the partition of edge orbits of $\sigma^{\s{E}}$. Since edge orbits are disjoint, we have
\begin{align}
   \prod_{i<j}\s{X}_{ij} = \prod_{O\in\calO}\s{X}_{O}.
\end{align}
Since $\{\s{A}_{ij}\}_{i<j}$ and $\{\s{B}_{ij}\}_{i<j}$ are independent under $\calH_0$, we conclude $\{\s{X_O}\}_{O\in\calO}$ are statistically independent under $\calH_0$. Therefore, 
\begin{align}
    \E_{\calH_0}[\s{L}_n^2]=\E_{\pi\indep\pi'}\pp{ \prod_{O\in\calO}\E_{\calH_0}(\s{X}_{O})}.\label{eqn:beforeInnerExpec}
\end{align}
Next, we find an expression for $\E_{\calH_0}(\s{X}_{O})$. To that end, we need a few definitions and notations. The kernel $\calL$ defines an operator as follows: for any square-integrable function $f$ under $\calQ$,
\begin{align}
    (\calL f)(x) \triangleq \bE_{Y\sim\calQ}\pp{\calL(x,Y)f(Y)},
\end{align}
In addition, $\calL^2= \calL\circ\calL$ is given by $\calL^2(x,y) = \bE_{Z\sim\calQ}[\calL(x,Z)\calL(Z,y)]$, and $\calL^k$ is similarly defined. Assume that $\calL(x,y) = \calL(y,x)$, and hence $\calL$ is self-adjoint. Furthermore, if we assume that $ \int\int \calL^2(x,y)\calQ(\mathrm{d}x)\calQ(\mathrm{d}y)<\infty$, then $\calL$ is Hilbert-Schmidt. Thus $\calL$ is diagonazable
with eigenvalues $\lambda_i$'s and the trace of $\calL$ is given by $\s{trace}(\calL) = \bE_{Y\sim\calQ}[\calL(Y,Y)] = \sum_{i\in\mathbb{N}}\lambda_i$. We have the following result.
\begin{lemma}\label{lem:secoLOrbit}
    Fixing $\pi$ and $\pi'$, for any orbit O of $\sigma \triangleq\pi^{-1}\circ\pi'$, we have
    \begin{align}
        \E_{\calH_0}(\s{X}_{O}) = \sum_{i\in\mathbb{N}}\lambda_i^{2|O|}. 
    \end{align}
\end{lemma}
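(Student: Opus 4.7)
The plan is to write $\s{X}_O$ as a cyclic product of likelihoods along the orbit, then reduce the expectation to a trace of an iterated operator by means of the spectral decomposition of $\calL$.

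First, I fix an orbit $O$ of $\sigma^{\s{E}}$ of length $k\triangleq|O|$ and enumerate its edges $e_0,e_1,\dots,e_{k-1}$ so that $e_{l+1}=\sigma^{\s{E}}(e_l)$ (indices mod $k$). Writing $e_l=(i_l,j_l)$, the relation $\pi'=\pi\circ\sigma$ gives $(\pi'_{i_l},\pi'_{j_l})=(\pi_{\sigma(i_l)},\pi_{\sigma(j_l)})=(\pi_{i_{l+1}},\pi_{j_{l+1}})$. Introduce the shorthand
\[
a_l\triangleq \s{A}_{i_l j_l},\qquad b_l\triangleq \s{B}_{\pi_{i_l}\pi_{j_l}},\qquad l=0,1,\dots,k-1.
\]
Because the $e_l$ are distinct and $\pi$ is a bijection, the pairs $(\pi_{i_l},\pi_{j_l})$ are also distinct, so under $\calH_0$ the $2k$ variables $\{a_l\}_{l=0}^{k-1}\cup\{b_l\}_{l=0}^{k-1}$ are i.i.d.\ with common marginal $\mu\triangleq\calP_A=\calP_B$ (using that $\s{A}\indep\s{B}$). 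With this notation,
\[
\s{X}_O=\prod_{l=0}^{k-1}\calL(a_l,b_l)\,\calL(a_l,b_{l+1}).
\]

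The next step is to integrate out the $a_l$'s using the spectral decomposition. Since $\calL$ is symmetric and Hilbert-Schmidt on $L^2(\mu)$, I expand it as $\calL(x,y)=\sum_{i\geq 0}\lambda_i\,\phi_i(x)\phi_i(y)$ with $\{\phi_i\}$ orthonormal in $L^2(\mu)$. For each $l$, expanding both factors involving $a_l$ and using $\bE_{\mu}[\phi_i(a_l)\phi_j(a_l)]=\delta_{ij}$, I get
\[
\bE_{a_l}\pp{\calL(a_l,b_l)\calL(a_l,b_{l+1})}=\sum_{i}\lambda_i^{2}\phi_i(b_l)\phi_i(b_{l+1})=\calL^{2}(b_l,b_{l+1}),
\]
the last equality being the defining identity of the iterated kernel. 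Since the $a_l$'s are mutually independent and independent of the $b_l$'s, I can carry out these $k$ integrations one at a time, obtaining
\[
\bE_{\calH_0}\pp{\s{X}_O\mid b_0,\dots,b_{k-1}}=\prod_{l=0}^{k-1}\calL^{2}(b_l,b_{l+1}).
\]

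Finally, integrating over the $b_l$'s (which are i.i.d.\ $\mu$) yields a cyclic integral against $\mu^{\otimes k}$ of a product of kernels, which is exactly $\s{trace}(\calL^{2k})$. Expanding each $\calL^2$ spectrally and successively applying orthonormality collapses the sum to $\sum_{i}\lambda_i^{2k}$, i.e.
\[
\bE_{\calH_0}(\s{X}_O)=\s{trace}\p{\calL^{2|O|}}=\sum_{i\in\mathbb{N}}\lambda_i^{2|O|},
\]
as claimed.

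The only real subtlety is the bookkeeping in the first paragraph: one has to see that the identity $\sigma=\pi^{-1}\circ\pi'$ is precisely what forces the second $b$-argument of $\calL$ to shift by one position along the orbit, so that after integrating out the $a_l$'s one is left with a \emph{cyclic} product in $b$ that closes up into a trace. Once this chaining is in place, the spectral manipulation is routine; I do not expect any further obstacle, and convergence issues are taken care of by the standing Hilbert-Schmidt assumption on $\calL$.
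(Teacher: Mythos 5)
Your proposal is correct and follows essentially the same route as the paper's proof: integrate out the $\s{A}$-variables to turn each pair $\calL(\s{A}_{ij},\s{B}_{\pi_i\pi_j})\calL(\s{A}_{ij},\s{B}_{\pi'_i\pi'_j})$ into $\calL^2$ evaluated at consecutive $\s{B}$-entries along the orbit, then recognize the resulting cyclic product as $\s{trace}(\calL^{2|O|})=\sum_i\lambda_i^{2|O|}$. Your version merely makes explicit the orbit enumeration and the spectral expansion that the paper's terser argument leaves implicit.
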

\begin{proof}[Proof of Lemma~\ref{lem:secoLOrbit}]
Since $O$ is an edge orbit of $\sigma$, we have $\{\s{B}_{\pi_i\pi_j}\}_{{(i,j)}\in O} = \{\s{B}_{\pi'_i\pi'_j}\}_{{(i,j)}\in O}$ and $(\pi'_i,\pi'_j) = (\pi_{\sigma_i},\pi_{\sigma_j})$. Then,
\begin{align}
   \E_{\calH_0}[\s{X}_{O}] &= \E_{{{A},{B}\sim\calQ}}\pp{\prod_{(i,j)\in O} \calL(\s{A}_{ij},\s{B}_{\pi_i\pi_j})\calL(\s{A}_{ij},\s{B}_{\pi'_i\pi'_j})}\\
   & = \bE\pp{\prod_{(i,j)\in O} \calL^2(\s{B}_{\pi_i\pi_j},\s{B}_{\pi'_i\pi'_j})}\\
   & = \s{trace}\p{\calL^{2|O|}}\\
   & = \sum_{i\in\mathbb{N}}\lambda_i^{2|O|}.
\end{align}
\end{proof}
Thus, applying Lemma~\ref{lem:secoLOrbit} on \eqref{eqn:beforeInnerExpec}, we obtain,
\begin{align}
    \E_{\calH_0}[\s{L}_n^2]&=\E_{\pi\indep\pi'}\pp{ \prod_{O\in\calO}\sum_{i\in\mathbb{N}}\lambda_i^{2|O|}}.\label{eqn:sumprod0}
\end{align}
For a fixed permutation $\pi\in \S_n$ and $k\in {\mathbb{N}}$, let $N_k(\sigma)$ denote the number of {total} $k$-node cycles of $\pi$. Then,
\begin{align}
    \E_{\calH_0}[\s{L}_n^2]&=\E_{\pi\indep\pi'}\pp{ \prod_{k=1}^{\binom{n}{2}}\p{\sum_{i\in\mathbb{N}}\lambda_i^{2k}}^{N_k}}.\label{eqn:sumprod}
\end{align}
The goal is now to understand the behaviour of the expectation w.r.t. the uniform distribution, as $n\to\infty$. Before we continue let us prove the following result.
\begin{lemma}\label{lem:eigen}
    The largest eigenvalue of $\calL$ is 1.
\end{lemma}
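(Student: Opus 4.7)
The plan is to verify two complementary facts: first, that $1$ is indeed an eigenvalue of $\calL$, realized by the constant eigenfunction; and second, that $\calL$ is a contraction on the relevant $L^2$ space, so that every eigenvalue lies in $[-1,1]$.

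For the first step, I would simply evaluate $\calL$ on the constant function $\mathbf{1}$. Let $\mu$ denote the common marginal of $A$ and $B$ (recall $\calP_{A}=\calP_{B}$, and $\calQ=\calP_A\times\calP_B$). Plugging $f\equiv 1$ into the definition, $(\calL\mathbf{1})(x)=\E_{Y\sim\mu}[\calL(x,Y)]=\frac{1}{\mu(x)}\int \calP(x,y)\,dy$, and using that the marginals of $\calP$ coincide with those of $\calQ$, the integral equals $\mu(x)$. Hence $\calL\mathbf{1}=\mathbf{1}$ and $1\in\{\lambda_i\}$.

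For the second step, the key observation is that $\calL$ is the conditional expectation operator $f\mapsto \E_{\calP}[f(Y)\mid X=\,\cdot\,]$: rewriting $\calL(x,y)\,\mu(dy)=\calP(dy\mid X=x)$ gives $(\calL f)(x)=\E_{\calP}[f(Y)\mid X=x]$. Conditional Jensen's inequality together with the tower property then yields $\|\calL f\|_{L^2(\mu)}^2\leq \E_{\calP}[f(Y)^2]=\|f\|_{L^2(\mu)}^2$, so $\|\calL\|_{\mathrm{op}}\leq 1$. Combined with the first step, the largest eigenvalue is exactly $1$, and moreover the convention $\lambda_0\geq \lambda_1\geq\cdots$ picks out $\lambda_0=1$.

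I do not anticipate any real obstacle: identifying $\calL$ with a conditional expectation is just a rewriting of the kernel, and contractivity of conditional expectation is classical. If one preferred to avoid the conditional-expectation interpretation, an alternative route would be a direct Cauchy--Schwarz estimate on the integral kernel, using $\int \calL(x,y)\,\mu(dy)=1$ to bound $|(\calL f)(x)|^2\leq \int \calL(x,y) f(y)^2\,\mu(dy)$ and integrating against $\mu(dx)$.
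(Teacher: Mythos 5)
Your argument is correct, and it takes a genuinely different route from the paper. Both proofs identify the eigenvalue $1$ the same way, by checking $\calL\mathbf{1}=\mathbf{1}$ using the fact that $\calP$ and $\calQ$ share marginals. The difference is in the upper bound. The paper works in an $L^\infty$ spirit: in the discrete case it views $\calL$ as a row-stochastic matrix $\mathbf{M}(x,y)=\calP(x,y)/\calQ(x)$ and invokes the Gershgorin circle theorem, and in the continuous case it picks a point $x^\star$ maximizing $|f_\lambda|$ and applies the triangle inequality to conclude $|\lambda|\le 1$. You instead give an $L^2$ argument: recognize $\calL$ as the conditional-expectation operator $f\mapsto \E_\calP[f(Y)\mid X=\cdot]$ and deduce $\|\calL\|_{\mathrm{op}}\le 1$ from conditional Jensen plus the tower property (or, equivalently, from Cauchy--Schwarz against the normalized kernel). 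Your route has two advantages worth noting: it treats the discrete and continuous cases uniformly, and it avoids the implicit assumption in the paper's continuous argument that $\arg\max_{x}|f_\lambda(x)|$ is attained and finite, which need not hold for an arbitrary $L^2$ eigenfunction on an unbounded $\calX$; since the operator is assumed Hilbert--Schmidt on $L^2(\calQ)$, the $L^2$ operator-norm bound is exactly the right level of generality. The paper's Gershgorin phrasing is perhaps more elementary for readers who think of $\calL$ as a finite matrix, but yours is the more robust argument.
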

\begin{proof}[Proof of Lemma~\ref{lem:eigen}]
We first prove the result for the case where $\calP$ and {$\calQ$} are probability mass functions, i.e., the datasets $\s{X}$ and $\s{Y}$ are discrete and take values over $\calX$ and $\calY$, respectively. The proof for the continuous case, where $\calP$ and {$\calQ$} are probability density functions, is similar. Note that for any square-integrable function $f$ under $\calQ$,
\begin{align}
    (\calL f)(x) &= \bE_{Y\sim \calQ}\pp{\calL(x,Y)f(Y)}\\
    & = \sum_{y\in\calY}\frac{\calP(x,y)}{\calQ(x)\calQ(y)}f(y)Q(y)\\
    & = \frac{1}{\calQ(x)}\sum_{y\in\calY}f(y)\calP(x,y).
\end{align}
Thus, the eigenvalues of $\calL$ are given by the eigenvalues of the following $|\calX|\times|\calY|$ row-stochastic matrix
$\mathbf{M}(x,y)\triangleq \frac{\calP(x,y)}{\calQ(x)}$. Since $\mathbf{M}$ is a row-stochastic matrix its largest eigenvalue is $1$. Indeed, let $\mathbf{1}$ be the all one vector. Then, note that $\mathbf{M}\mathbf{1} = \mathbf{1}$, and thus $1$ is an eigenvalue of $\mathbf{M}$. To show that the this is the largest eigenvalue we use Gershgorin circle theorem. Specifically, take row $\ell$ in $\mathbf{M}$. The diagonal element will be $\mathbf{M}_{\ell\ell}$ and the radius will be $\sum_{i\neq \ell} |\mathbf{M}_{\ell i}| = \sum_{i \neq \ell} \mathbf{M}_{\ell i}$ since all $\mathbf{M}_{\ell i} \geq 0$. This will be a circle with its center in $\mathbf{M}_{\ell \ell} \in [0,1]$, and a radius of $\sum_{i \neq \ell} \mathbf{M}_{\ell i} = 1-\mathbf{M}_{\ell \ell}$. So, this circle will have $1$ on its perimeter. This is true for all Gershgorin circles for this matrix (since $\ell$ was chosen arbitrarily). Thus, since all eigenvalues lie in the union of the Gershgorin circles, all eigenvalues $\lambda_i$ satisfy $|\lambda_i| \leq 1$. 

In the continuous case, we use similar arguments. Specifically, let $\lambda$ be an eigenvalue of $\calL$ with corresponding eigenfunction $f_\lambda$. Let $x^\star = \arg\max_{x\in\calX}|f_\lambda(x)|$. Because $f_\lambda$ is an eigenfunction, we have
\begin{align}
    (\calL f_\lambda)(x) = \bE_{Y\sim \calQ}\pp{\calL(x,Y)f_\lambda(Y)} = \lambda\cdot f_\lambda(x),\label{eqn:GershCont}
\end{align}
for any $x\in\calX$. Substituting $x = x^\star $ in \eqref{eqn:GershCont} and isolating $\lambda$, we get
\begin{align}
    |\lambda| &= \frac{\abs{\bE_{Y\sim \calQ}\pp{\calL({x^\star},Y)f_\lambda(Y)}}}{|f_\lambda(x^\star)|}\\
    &\leq \bE_{Y\sim \calQ}\pp{\calL({x^\star},Y)\abs{\frac{f_\lambda(Y)}{f_\lambda(x^\star)}}}\\
    &\leq \bE_{Y\sim \calQ}\pp{\calL({x^\star},Y)} = 1,
\end{align}
where the first inequality follows from the triangle inequality, the second inequality is because $\abs{\frac{f_\lambda(x)}{f_\lambda(x^\star)}}\leq1$, for any $x\in\calX$, and the last equality follows by the definition of $\calL$. Since $\lambda$ is arbitrary, the above arguments hold for any eigenvalue of $\calL$. Finally, note that the identity eigenfunction, i.e., $f(x) = 1$, for any $x\in\calX$, corresponds to $\lambda=1$.

\end{proof}

\subsubsection{Truncation technique}

We start by describing the general program of conditional second moment method. Note that sometimes certain rare events under $\pr_{\calH_1}$ can cause the second moment to explode, whereas $d_{\s{TV}}(\pr_{\calH_0},\pr_{\calH_1})$ remains bounded away from one. To circumvent such catastrophic events, we can compute the second moment conditioned on events that are typical under $\pr_{\calH_1}$. More precisely, given an event $\calE$ such that $\pr_{\calH_1}(\calE)=1+o(1)$, define the planted model conditional on $\calE$,
\begin{align}
    \pr_{\calH_1}'(\s{A},\s{B},\pi)&\triangleq\frac{\pr_{\calH_1}(\s{A},\s{B},\pi)\Ind_{\ppp{(\s{A},\s{B},\pi)\in\calE}}}{\pr_{\calH_1}(\calE)}\\
    &= (1+o(1))\pr_{\calH_1}(\s{A},\s{B},\pi)\Ind_{\ppp{(\s{A},\s{B},\pi)\in\calE}}.
\end{align}
Then, the likelihood ratio between the conditioned planted model $\pr_{\calH_1}$ and the null model $\pr_{\calH_0}$ is given by
\begin{align}
    \frac{\pr_{\calH_1}'(\s{A},\s{B})}{\pr_{\calH_0}(\s{A},\s{B})} &= 
    {
    \frac{\int_{\mathbb{S}^n}\pr_{\calH_1}'(\s{A},\s{B},\pi)\mathrm{d}\pi}{\pr_{\calH_0}(\s{A},\s{B})}
    }\\     
    &=(1+o(1))\int_{\mathbb{S}^n}\frac{\pr_{\calH_1\vert\pi}(\s{A},\s{B}\vert\pi)\Ind_{\ppp{(\s{A},\s{B},\pi)\in\calE}}}{\pr_{\calH_0}(\s{A},\s{B})}\mathrm{d}\s{Unif}(\pi)\\
    & = (1+o(1))\bE_{\pi}\pp{\frac{\pr_{\calH_1\vert\pi}(\s{A},\s{B}\vert\pi)}{\pr_{\calH_0}(\s{A},\s{B})}\Ind_{\ppp{(\s{A},\s{B},\pi)\in\calE}}}.
\end{align}
Then, by the same reasoning that led to \eqref{eqn:beforeInnerExpec}, the conditional second moment is given by
\begin{align}
    \bE_{\calH_0}\pp{\p{\frac{\pr_{\calH_1}'(\s{A},\s{B})}{\pr_{\calH_0}(\s{A},\s{B})}}^2} =  (1+o(1))\cdot\E_{\pi\indep\pi'}\pp{\E_{\calH_0}\pp{ \prod_{O\in\calO}\s{X}_{O}\Ind_{\ppp{(\s{A},\s{B},\pi)\in\calE}}\Ind_{\ppp{(\s{A},\s{B},\pi')\in\calE}}}}.
\end{align}
Compared to the unconditional second moment, the extra indicators will be useful for ruling out those rare events causing the second moment to blow up. By the data processing inequality of total variation, we have
\begin{align}
    d_{\s{TV}}(\pr_{\calH_1}'(\s{A},\s{B}),\pr_{\calH_1}(\s{A},\s{B}))&\leq d_{\s{TV}}(\pr_{\calH_1}'(\s{A},\s{B},\pi),\pr_{\calH_1}(\s{A},\s{B},\pi))\\
    & = \pr_{\calH_1}\p{(\s{A},\s{B},\pi)\not\in\calE} = o(1).
\end{align}
Thus, {we} arrive at the following conditions for non-detection:
\begin{align}
    &\bE_{\calH_0}\pp{\p{\frac{\pr_{\calH_1}'(\s{A},\s{B})}{\pr_{\calH_0}(\s{A},\s{B})}}^2} = O(1)\quad\longrightarrow\quad  d_{\s{TV}}(\pr_{\calH_0}(\s{A},\s{B}),\pr_{\calH_1}(\s{A},\s{B}))\leq 1-\Omega(1),\\
    &\bE_{\calH_0}\pp{\p{\frac{\pr_{\calH_1}'(\s{A},\s{B})}{\pr_{\calH_0}(\s{A},\s{B})}}^2} = 1+o(1)\quad\longrightarrow\quad  d_{\s{TV}}(\pr_{\calH_0}(\s{A},\s{B}),\pr_{\calH_1}(\s{A},\s{B}))\leq o(1).
\end{align}
Thus, one of our main goals is to choose $\calE$ wisely. 

Let $\calF$ denotes the set of fixed points of $\sigma = \pi^{-1}\circ\pi'$, so that $|\calF| = n_1$. Let
\begin{align}
    \calO_1\triangleq\binom{\calF}{2},
\end{align}
which is a subset of fixed points of the edge permutation. 
%Now, by \eqref{eqn:likelihoodSignle},
%\begin{align}
%    \calL_{ij}(\alpha,\beta) = \p{\frac{1-\eta_{ij}}{1-p_{ij}s}}\p{\frac{1-s}{1-\eta_{ij}}}^{\alpha+\beta}\p{\frac{s(1-\eta_{ij})}{\eta_{ij}(1-s)}}^{\alpha\beta},
%\end{align}
%where $\eta_{ij}\triangleq\frac{p_{ij}s(1-s)}{1-p_{ij}s}$. Thus,
With this definition, we have,
\begin{align}
    \prod_{O\in\calO_1}\s{X}_{O} &= \prod_{i<j\in\calF}\s{X}_{ij}\\
    & = \prod_{i<j\in\calF}\calL(\s{A}_{ij},\s{B}_{\pi_i\pi_j})\calL(\s{A}_{ij},\s{B}_{\pi'_i\pi'_j})\\
    & =\exp\p{\sum_{i<j\in\calF}\log\calL(\s{A}_{ij},\s{B}_{\pi_i\pi_j})+\log\calL(\s{A}_{ij},\s{B}_{\pi'_i\pi'_j})}\\
    &=  \exp\p{2\sum_{i<j\in\calF}\log\calL(\s{A}_{ij},\s{B}_{\pi_i\pi_j})}.
    %&= \exp\p{2\sum_{i<j\in\calF}\log\alpha_{ij}+2e_{\s{A}}(\calF)+2e_{\s{B}^\pi}(\calF)+2e_{\s{A}\wedge\s{B}^\pi}(\calF)},
    %\\&=  \exp\p{\sum_{i<j\in\calF}\p{\log\calP(A_{ij}\vert B_{\pi_i\pi_j}) + \log\calP(B_{\pi_i\pi_j} \vert A_{ij})} - \sum_{i<j\in\calF} \log\calQ(A_{ij}) - \sum_{i<j\in\calF} \log\calQ(B_{\pi_i\pi_j})}  
\end{align}
We aim to truncate $\prod_{i<j\in\calF}\s{X}_{ij}$ by conditioning on some event under the planted model when $|\calF|=n_1$ is large. For $2\leq k\leq n$, define
\begin{align}
    \zeta(k)\triangleq \binom{k}{2}\cdot E_{\calP}^{-1}\p{\frac{2\log(2en/k)}{k-1}},
\end{align}
and for each $\calS\subset[n]$, define the event,
\begin{align}
    \calE_{\calS}&\triangleq\left\{(\s{A},\s{B},\pi):\sum_{i<j\in\calS}\log\calL(\s{A}_{ij},\s{B}_{\pi_i\pi_j})\leq \zeta(|\calS|)\right\}.
\end{align} 
We condition on the event,
\begin{align}
    \calE\triangleq\bigcap_{\calS\subset[n]:\;\alpha n\leq|\calS|\leq n}\calE_{\calS},
\end{align}
where $\alpha$ is any fixed real-valued number in the interval $(0,1)$. The following lemma proves that $\calE$ holds with high probability under $\calP$.
\begin{lemma}\label{lem:largedeviation}
    Suppose $\alpha n=\omega(1)$. Then, 
    $$
    \pr_{\calH_1}\pp{(\s{A},\s{B},\pi)\in\calE} = 1-e^{-\Omega(n)}.
    $$    
\end{lemma}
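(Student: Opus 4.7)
The plan is to apply a Chernoff bound for each fixed subset $\calS$ and then union-bound over the (polynomially many sizes, but exponentially many subsets of each size). The choice of $\zeta(k)$ in the definition of $\calE_\calS$ is engineered precisely so that the Chernoff exponent cancels the $\binom{n}{k}$ entropy factor, leaving a summable geometric tail.

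The first observation is that under $\pr_{\calH_1}$, conditional on $\pi$, the random pairs $(\s{A}_{ij},\s{B}_{\pi_i\pi_j})$ for $i<j$ are i.i.d.\ from $\calP$. Hence for any fixed $\calS\subset[n]$ of size $k$, the sum
\[
S_\calS \;\triangleq\; \sum_{i<j\in\calS}\log\calL(\s{A}_{ij},\s{B}_{\pi_i\pi_j})
\]
is, under $\pr_{\calH_1}$, a sum of $\binom{k}{2}$ i.i.d.\ copies of $\mathscr{L}(A,B)$ with $(A,B)\sim\calP$. The expectation of each term is $d_{\s{KL}}(\calP\Vert\calQ)$, and the log-moment generating function is $\psi_{\calP}(\lambda)$. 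Applying the standard Chernoff (Cram\'er) upper tail bound,
\[
\pr_{\calH_1}\!\pp{S_\calS \geq \binom{k}{2}\theta}
\;\leq\; \exp\!\p{-\binom{k}{2}E_{\calP}(\theta)},
\qquad \theta \geq d_{\s{KL}}(\calP\Vert\calQ),
\]
since $E_{\calP}$ is convex and vanishes at $d_{\s{KL}}(\calP\Vert\calQ)$.

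Now I plug in $\theta = E_{\calP}^{-1}\!\p{\tfrac{2\log(2en/k)}{k-1}}$, which is precisely $\zeta(k)/\binom{k}{2}$. By the definition of $E_{\calP}^{-1}$ (as the inverse of the increasing right branch of $E_{\calP}$, which exists since $E_{\calP}$ is convex and strictly increasing on $[d_{\s{KL}}(\calP\Vert\calQ),\infty)$), this gives $E_{\calP}(\theta)=\tfrac{2\log(2en/k)}{k-1}$, so
\[
\pr_{\calH_1}\pp{S_\calS \geq \zeta(k)}
\;\leq\; \exp\!\p{-\binom{k}{2}\cdot\tfrac{2\log(2en/k)}{k-1}}
\;=\; \p{\tfrac{k}{2en}}^{k}.
\]
A union bound over the $\binom{n}{k}\leq(en/k)^k$ subsets of size $k$ yields
\[
\pr_{\calH_1}\!\pp{\,\exists\,\calS:\,|\calS|=k,\;(\s{A},\s{B},\pi)\notin\calE_\calS}
\;\leq\; \p{\tfrac{en}{k}}^{k}\p{\tfrac{k}{2en}}^{k}
\;=\; 2^{-k}.
\]

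Finally, a union over $\alpha n \leq k\leq n$ gives
\[
\pr_{\calH_1}\pp{(\s{A},\s{B},\pi)\notin\calE}
\;\leq\; \sum_{k=\lceil\alpha n\rceil}^{n} 2^{-k}
\;\leq\; 2^{-\alpha n + 1},
\]
which is $e^{-\Omega(n)}$ since $\alpha\in(0,1)$ is a fixed constant (and the condition $\alpha n = \omega(1)$ is in particular satisfied). The main thing to verify carefully is the monotonicity/invertibility of $E_{\calP}$ on the relevant range $[d_{\s{KL}}(\calP\Vert\calQ),\infty)$ and that $\tfrac{2\log(2en/k)}{k-1}$ lies in the image of $E_{\calP}$ for all $k$ in the range; this follows from the basic properties of the Legendre transform listed in Section~2 of the paper together with the fact that $\tfrac{2\log(2en/k)}{k-1}$ is bounded above for $k\geq \alpha n$. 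Everything else is a mechanical Chernoff-plus-union-bound computation.
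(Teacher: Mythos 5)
Your proof is correct and follows essentially the same route as the paper: a Chernoff bound with exponent $E_{\calP}$ for each fixed subset $\calS$ of size $k$, with $\zeta(k)$ chosen so the bound equals $(k/2en)^k$, then a union bound over the $\binom{n}{k}\leq(en/k)^k$ subsets giving $2^{-k}$, and a geometric sum over $\alpha n\leq k\leq n$. Your explicit remark on the monotonicity of $E_{\calP}$ on $[d_{\s{KL}}(\calP\|\calQ),\infty)$ is a slightly more careful rendering of the same argument; no substantive difference.
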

\begin{proof}[Proof of Lemma~\ref{lem:largedeviation}]
Fix an integer $\alpha n\leq k\leq n$, and let $m = \binom{k}{2}$. Fix a subset $\calS\subset[n]$ with $|\calS|=k$. Let $\tau,\alpha\in\mathbb{R}$, and $\delta>0$ be a parameter to be specified later on. Then, using Chernoff's bounds we get,
\begin{align}
    \pr_{\calH_1}\p{\sum_{i<j\in\calS}\log\calL(\s{A}_{ij},\s{B}_{\pi_i\pi_j})<m\tau} &= 1-\pr_{\calH_1}\p{\sum_{i<j\in\calS}\log\calL(\s{A}_{ij},\s{B}_{\pi_i\pi_j})\geq m\tau}\\
    &\geq 1-\exp\p{-mE_{\calP}(\tau)}\\
    &= 1-\delta,
\end{align}
where the last inequality holds provided that $\delta\geq \exp\p{-mE_{\calP}(\tau)}$. Therefore, with probability at least $1-\delta$, we obtain that $\sum_{i<j\in\calS}\log\calL(\s{A}_{ij},\s{B}_{\pi_i\pi_j})<mE_{\calP}^{-1}(m^{-1}\log\delta^{-1})$, under $\calP$. Now, there are $\binom{n}{k}\leq \p{\frac{en}{k}}^k$ different choices of $\calS\subset[n]$ with $|\calS|=k$. Thus, by choosing $\frac{1}{\delta} = \p{\frac{2en}{k}}^k$ and applying the union bound, we get that with probability at least $1-\sum_{k=\alpha n}^n2^{-k} = 1-e^{-\Omega(\alpha n)}$ we have $\sum_{i<j\in\calS}\log\calL(\s{A}_{ij},\s{B}_{\pi_i\pi_j})<mE_{\calP}^{-1}(m^{-1}\log\delta^{-1})$, for all $\calS\subset[n]$ with $|\calS|=k$ and all $\alpha n\leq k\leq n$. 
\end{proof}

Now, we compute the conditional second moment. Using the lemma above we can write that,
\begin{align}
    \bE_{\calH_0}\pp{\p{\frac{\pr_{\calH_1}'(\s{A},\s{B})}{\pr_{\calH_0}(\s{A},\s{B})}}^2} = (1+o(1))\cdot\E_{\pi\indep\pi'}\pp{\E_{\calH_0}\pp{ \prod_{O\in\calO}\s{X}_{O}\Ind_{\ppp{(\s{A},\s{B},\pi)\in\calE}}\Ind_{\ppp{(\s{A},\s{B},\pi')\in\calE}}}}.
\end{align}
Next, we fix $\pi$ and $\pi'$ and separately consider the following two cases.
\paragraph{Case I.} When $n_1< \alpha n$, we bound,
    \begin{align}
        \E_{\calH_0}\pp{ \prod_{O\in\calO}\s{X}_{O}\Ind_{\ppp{(\s{A},\s{B},\pi)\in\calE}}\Ind_{\ppp{(\s{A},\s{B},\pi')\in\calE}}}&\leq \E_{\calH_0}\pp{ \prod_{O\in\calO}\s{X}_{O}}\\
        & \leq \prod_{O\in\calO}\p{\sum_{i\in\mathbb{N}}\lambda_i^{2|O|}}.
    \end{align}
\paragraph{Case II.} When $\alpha n\leq n_1\leq n$, we bound,
\begin{align}
        \E_{\calH_0}\pp{ \prod_{O\in\calO}\s{X}_{O}\Ind_{\ppp{(\s{A},\s{B},\pi)\in\calE}}\Ind_{\ppp{(\s{A},\s{B},\pi')\in\calE}}}&\leq \E_{\calH_0}\p{ \prod_{O\in\calO}\s{X}_{O}\Ind_{\ppp{(\s{A},\s{B},\pi)\in\calE_{\calF}}}}\\
        &=\prod_{O\not\in\calO_1}\bE_{\calH_0}[\s{X}_O]\E_{\calH_0}\pp{\prod_{O\in\calO_1} \s{X}_{O}\Ind_{\ppp{(\s{A},\s{B},\pi)\in\calE_{\calF}}}}\\
        & \leq \prod_{O\not\in\calO_1}\p{\sum_{i\in\mathbb{N}}\lambda_i^{2|O|}}\E_{\calH_0}\pp{\prod_{i<j\in\calF} \s{X}_{ij}\Ind_{\ppp{(\s{A},\s{B},\pi)\in\calE_{\calF}}}},
\end{align}
where the first inequality holds because by definition $\calE\subset\calE_\calF$ when $n_1>\alpha n$, the equality follows because $\s{X}_{O}$ is a function of $(\s{A}_{ij},\s{B}_{\pi_i\pi_j})$ that are independent across different $O\in\calO$, and finally $\Ind_{\ppp{(\s{A},\s{B},\pi)\in\calE_{\calF}}}$ is only a function of $(\s{A}_{ij},\s{B}_{\pi_i\pi_j})$ for $(i,j)\in O$ with $O\in\calO_1$. Now, on the event $\calE_\calF$ we have
\begin{align}
\sum_{i<j\in\calF}\log\calL(\s{A}_{ij},\s{B}_{\pi_i\pi_j})\leq \binom{n_1}{2}E_{\calP}^{-1}\p{\frac{2\log(2\alpha e)}{\alpha n-1}}.
\end{align}
This is because $n_1\geq\alpha n$, and the monotonicity of $E_\calP^{-1}(\cdot)$. Indeed, note that $E_{\calP}(\theta)$ is convex and non-negative. Furthermore, we have that $E_{\calP}(\mu) = 0$, where $\mu = \E_{\calP}\pp{\mathscr{L}}$. So $E_{\calP}(\theta)$ is decreasing on $(-\infty,\mu)$ and increasing on $(\mu,\infty)$. Then, the inverse function is monotonic concave on the domain $[0,\infty)$. Accordingly, let  $\zeta\triangleq\binom{n_1}{2}E_{\calP}^{-1}\p{\frac{2\log(2\alpha e)}{\alpha n-1}}$, $\bar\zeta\triangleq E_{\calP}^{-1}\p{\frac{2\log(2\alpha e)}{\alpha n-1}}$, and denote $e_{\s{A},\s{B}}(\calF)\triangleq\sum_{i<j\in\calF}\log\calL(\s{A}_{ij},\s{B}_{\pi_i\pi_j})$. We have,
\begin{align}
    &\E_{\calH_0}\pp{\prod_{i<j\in\calF} \s{X}_{ij}\Ind_{\ppp{(\s{A},\s{B},\pi)\in\calE_{\calF}}}} \leq \bE_{\calQ}\pp{\exp\p{2e_{\s{A},\s{B}}(\calF)}\Ind_{{\{}e_{\s{A},\s{B}}(\calF)\leq\zeta{\}}}}.
\end{align}
%$$
%\Ind_{e_{\s{A},\s{B}}(\calF)\leq\zeta}\leq \exp\pp{2(1-\kappa)\cdot(\zeta-e_{\s{A},\s{B}}(\calF))}.
%$$
Then, for any $\kappa\in[0,1]$,
\begin{align}
   \bE_{\calQ}\pp{\exp\p{2e_{\s{A},\s{B}}(\calF)}\Ind_{{\{}e_{\s{A},\s{B}}(\calF)\leq\zeta{\}}}}&\leq\bE_{\calQ}\pp{\exp\ppp{2\p{\kappa e_{\s{A},\s{B}}(\calF)+(1-\kappa)\zeta}}}\\
   & = \exp\ppp{2(1-\kappa)\zeta+\binom{n_1}{2}\psi_Q(2\kappa)}\\
   &=\exp\ppp{2\zeta-\binom{n_1}{2}\pp{2\kappa\bar\zeta-\psi_\calQ(2\kappa)}}.
\end{align}
%Recall from the statement of Theorem~\ref{th:2LB} that we assume that there exist $\kappa\in[0,1]$ such that 
Choosing $\kappa = \frac{1}{2}$ yields,
\begin{align}
    2\zeta-\binom{n_1}{2}\pp{2\kappa\bar\zeta-\psi_\calQ(2\kappa)} =2\zeta-\binom{n_1}{2}\bar\zeta.
\end{align}
Therefore,
\begin{align}
    \E_{\calH_0}\pp{\prod_{i<j\in\calF} \s{X}_{ij}\Ind_{\ppp{(\s{A},\s{B},\pi)\in\calE_{\calF}}}} &\leq \exp\p{2\zeta-\binom{n_1}{2}\bar\zeta}\\
    & = \exp\pp{\binom{n_1}{2}\bar\zeta}.
\end{align}
Combining the two cases yields that,
\begin{align}
    \bE_{\calH_0}\pp{\p{\frac{\pr_{\calH_1}'(\s{A},\s{B})}{\pr_{\calH_0}(\s{A},\s{B})}}^2}
    &\leq (1+o(1))\cdot\bE\pp{\prod_{O\in\calO}\p{\sum_{i\in\mathbb{N}}\lambda_i^{2|O|}}\Ind_{{\{}n_1\leq \alpha n{\}}}}\nonumber\\
    &\quad+(1+o(1))\cdot\bE\pp{\prod_{O\not\in\calO_1}\p{\sum_{i\in\mathbb{N}}\lambda_i^{2|O|}}\exp\pp{\binom{n_1}{2}\bar\zeta}\Ind_{{\{}n_1> \alpha n{\}}}}.
\end{align}
Define $\norm{\boldsymbol{\lambda}}_p\triangleq\sum_{i\geq1}\lambda_i^{p}$. Then, since $\sum_kN_k\leq\binom{n}{2}$, we have,
\begin{align}
    \prod_{k=3}^{\binom{n}{2}}\p{\sum_{i\in\mathbb{N}}\lambda_i^{2k}}^{N_k}&\leq \p{\sum_{i\in\mathbb{N}}\lambda_i^{6}}^{\binom{n}{2}}\\
    & = \p{1+\sum_{i\geq1}\lambda_i^{6}}^{\binom{n}{2}}\\
    &\leq \exp\p{n^2\sum_{i\geq1}\lambda_i^{6}}\\
    &= 1+o(1),\label{eqn:vanish3more}
\end{align}
where in the last passage we assumed that  $n^2\norm{\boldsymbol{\lambda}}_6 = o(1)$. Now, note that
\begin{align}
    \prod_{O\not\in\calO_1}\p{\sum_{i\in\mathbb{N}}\lambda_i^{2|O|}} &= \p{\sum_{i\in\mathbb{N}}\lambda_i^{2}}^{n_2}\prod_{k\geq2}\p{\sum_{i\in\mathbb{N}}\lambda_i^{2k}}^{N_k}\\
    & = (1+o(1))\p{\sum_{i\in\mathbb{N}}\lambda_i^{2}}^{n_2}\p{\sum_{i\in\mathbb{N}}\lambda_i^{4}}^{N_2}\\
    &\leq (1+o(1))\exp\p{n_2\cdot\norm{\boldsymbol{\lambda}}_2+N_2\cdot\norm{\boldsymbol{\lambda}}_4}\\
    &\leq (1+o(1))\exp\p{n_2\cdot\norm{\boldsymbol{\lambda}}_2+N_2\cdot\norm{\boldsymbol{\lambda}}_2^2},
\end{align}
where in the second equality we have used \eqref{eqn:vanish3more} and the last inequality follows from the fact that $\norm{\boldsymbol{\lambda}}_4\leq \norm{\boldsymbol{\lambda}}_2^2$. Similarly,
\begin{align}
    \prod_{O\in\calO_1}\p{\sum_{i\in\mathbb{N}}\lambda_i^{2|O|}} = \p{\sum_{i\in\mathbb{N}}\lambda_i^{2}}^{\binom{n_1}{2}}\leq \exp\p{n_1^2\cdot\norm{\boldsymbol{\lambda}}_2/2}.
\end{align}
Hence,
\begin{align}
    \bE_{\calH_0}\pp{\p{\frac{\pr_{\calH_1}'(\s{A},\s{B})}{\pr_{\calH_0}(\s{A},\s{B})}}^2}&\leq (1+o(1))\cdot\bE\pp{\exp\p{(n_1^2/2+n_2)\cdot\norm{\boldsymbol{\lambda}}_2+N_2\cdot\norm{\boldsymbol{\lambda}}_2^2}\Ind_{{\{}n_1\leq \alpha n{\}}}}\nonumber\\
    &+(1+o(1))\cdot\bE\pp{\exp\p{n_2\cdot\norm{\boldsymbol{\lambda}}_2+N_2\cdot\norm{\boldsymbol{\lambda}}_2^2}\exp\pp{\binom{n_1}{2}\bar\zeta}\Ind_{{\{}n_1> \alpha n{\}}}}.
\end{align}
We upper bound the two terms separately. To bound the first term, we apply \cite[eq. (48)]{YihongJiamingSophie} with $\mu = \norm{\boldsymbol{\lambda}}_2/2$, $\nu=0$, $a=0$, $\tau = \norm{\boldsymbol{\lambda}}_2$, and $b=\alpha n$. If,
\begin{align}
    \norm{\boldsymbol{\lambda}}_2^2 = o(n^{-1}),\ \textnormal{and}\ \frac{\alpha\norm{\boldsymbol{\lambda}}_2n}{2}+2-\log(\alpha n)\leq0,\label{eqn:cond1lower}
\end{align}
then \cite[eq. (48)]{YihongJiamingSophie} implies that
\begin{align}
    \bE\pp{\exp\p{(n_1^2/2+n_2)\cdot\norm{\boldsymbol{\lambda}}_2+N_2\cdot\norm{\boldsymbol{\lambda}}_2^2}\Ind_{{\{}n_1\leq \alpha n{\}}}}\leq 1+o(1).
\end{align}
To bound the second term, we apply \cite[eq. (47)]{YihongJiamingSophie} with $\mu = \frac{\bar\zeta}{2}$, $\nu=0$, $a=\alpha n$, $\tau = \norm{\boldsymbol{\lambda}}_2$, and $b=n$. If,
\begin{align}
    \norm{\boldsymbol{\lambda}}_2^2 = o(n^{-1}),\ \textnormal{and}\ \frac{\bar\zeta}{2}n+2-\log n\leq0,\label{eqn:cond2lower}
\end{align}
then \cite[eq. (47)]{YihongJiamingSophie} implies that 
\begin{align}
    \bE\ppp{\exp\p{n_2\cdot\norm{\boldsymbol{\lambda}}_2+N_2\cdot\norm{\boldsymbol{\lambda}}_2^2}\exp\pp{\zeta}\Ind_{{\{}n_1> \alpha n{\}}}}\leq 1+o(1).
\end{align}
Note that the condition in the right-hand-side of \eqref{eqn:cond1lower} can be written as,
\begin{align}
    \norm{\boldsymbol{\lambda}}_2\leq\frac{2\log(\alpha n)-4}{\alpha n},\label{eqn:cond3lower}
\end{align}
which implies the condition in the left-hand-side of \eqref{eqn:cond1lower} and \eqref{eqn:cond2lower}, and thus dominates. This condition also implies the assumption above that $n^2\norm{\boldsymbol{\lambda}}_6 = o(1)$. Note that from the proof of Lemma~\ref{lem:secoLOrbit}, we can see that 
\begin{align}
\norm{\boldsymbol{\lambda}}_2&=\sum_{i\in\mathbb{N}}\lambda_i^2-1=\bE_{\calQ}\pp{\calL^2}-1=\chi^2\p{\calP || \calQ},
\end{align}
so the condition in \eqref{eqn:cond3lower} can be written as, 
\begin{align}
    \chi^2\p{\calP || \calQ}\leq\frac{2\log(\alpha n)-4}{\alpha n}.\label{eqn:cond4lower}
\end{align}
The condition in the right-hand-side of \eqref{eqn:cond2lower} can be written as,
\begin{align}
    \bar\zeta\leq\frac{2\log n-4}{n}.\label{eqn:cond5lower}
\end{align} 
Notice that the right-hand-side of \eqref{eqn:cond4lower} is monotonically decreasing as a function of $\alpha$. 

We would like to simplify the required conditions. 
\begin{lemma}
    The condition $\bar\zeta = E_{\calP}^{-1}\p{\frac{2\log(2\alpha e)}{\alpha n-1}}$ is contained in
    \begin{align}
        \bar\zeta \le d_{\s{KL}}(\calP || \calQ) + O(\lambda^\star \s{Var}_{\calP}(\mathscr{L})) + \frac{2\log(2\alpha e)}{\lambda^\star (\alpha n - 1)},
    \end{align}
    for any $\lambda^\star\to0$ and $\alpha\in(0,1)$.
\end{lemma}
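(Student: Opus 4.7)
The plan is to leverage the Legendre--transform definition of $E_{\calP}$ together with a second-order Taylor expansion of $\psi_{\calP}$ at the origin. Writing $t \triangleq \frac{2\log(2\alpha e)}{\alpha n - 1}$ for the target value and using $E_{\calP}(\bar\zeta) = t$, the definition in \eqref{eq:Legendre_trans_def} gives $\lambda \bar\zeta - \psi_{\calP}(\lambda) \le t$ for every $\lambda \in \mathbb{R}$. Rearranging in the regime $\lambda > 0$ produces the family of upper bounds
\[
\bar\zeta \;\le\; \frac{\psi_{\calP}(\lambda)}{\lambda} + \frac{t}{\lambda}, \qquad \lambda > 0.
\]
The lemma then reduces to specializing to $\lambda = \lambda^\star$ and showing that $\psi_{\calP}(\lambda^\star)/\lambda^\star = d_{\s{KL}}(\calP \| \calQ) + O(\lambda^\star \s{Var}_{\calP}(\mathscr{L}))$.

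To obtain that identity, I would Taylor--expand $\psi_{\calP}$ about $\lambda = 0$ using the standard identities $\psi_{\calP}(0) = 0$, $\psi'_{\calP}(0) = \E_{\calP}[\mathscr{L}] = d_{\s{KL}}(\calP \| \calQ)$, and $\psi''_{\calP}(0) = \s{Var}_{\calP}(\mathscr{L})$. By Taylor's theorem with mean-value remainder, there exists $\xi \in [0, \lambda^\star]$ such that
\[
\psi_{\calP}(\lambda^\star) \;=\; d_{\s{KL}}(\calP \| \calQ)\, \lambda^\star \;+\; \tfrac{1}{2} \psi''_{\calP}(\xi)\, (\lambda^\star)^2.
\]
Dividing by $\lambda^\star$ and substituting into the display above yields
\[
\bar\zeta \;\le\; d_{\s{KL}}(\calP \| \calQ) \;+\; \tfrac{1}{2} \psi''_{\calP}(\xi)\, \lambda^\star \;+\; \frac{t}{\lambda^\star},
\]
which has exactly the shape of the claimed inequality, modulo identifying $\psi''_{\calP}(\xi) = O(\s{Var}_{\calP}(\mathscr{L}))$.

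The last step is the identification $\psi''_{\calP}(\xi) = \s{Var}_{\calP}(\mathscr{L})(1+o(1))$. Since $\lambda^\star \to 0$, the intermediate point $\xi \in [0, \lambda^\star]$ also tends to $0$, and continuity of $\psi''_{\calP}$ in a neighborhood of the origin closes the argument. The main (and essentially only) technical subtlety lies exactly here: one needs the moment generating function $\lambda \mapsto \E_{\calP}[\exp(\lambda \mathscr{L})]$ to be finite on an open neighborhood of $0$, so that $\psi_{\calP}$ is twice continuously differentiable (in fact analytic) there and the Taylor expansion is legitimate. This regularity is implicit in the paper's standing hypothesis that both Chernoff exponents $E_{\calP}$ and $E_{\calQ}$ are defined and finite on a nondegenerate interval around the origin, and under that hypothesis the argument above carries through for any sequence $\lambda^\star \to 0$ and any $\alpha \in (0,1)$, yielding the claim.
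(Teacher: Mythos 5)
Your proposal is correct and follows essentially the same route as the paper: lower-bound the Legendre transform $E_{\calP}(\bar\zeta)$ by its value at $\lambda^\star$, rearrange to isolate $\bar\zeta$, and expand $\psi_{\calP}$ to second order at the origin using $\psi_{\calP}(0)=0$, $\psi'_{\calP}(0)=d_{\s{KL}}(\calP\|\calQ)$, $\psi''_{\calP}(0)=\s{Var}_{\calP}(\mathscr{L})$. Your treatment of the remainder (mean-value form plus continuity of $\psi''_{\calP}$ under finiteness of the moment generating function near $0$) is in fact slightly more careful than the paper's informal $O(\cdot)$ expansion, but it is the same argument.
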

\begin{proof}
    Recall the definition of $E_{\calP}(\theta)$ in \eqref{eq:Legendre_trans_def}, then for any $\lambda^\star\in\mathbb{R}$ it holds that $E_{\calP}(\theta)) \ge \lambda^{\star}\theta  - \psi_{\calP}(\lambda^{\star})$. Specifically, for $\theta = \bar{\zeta}$ we have on the one hand
    \begin{align}
        E_{\calP}(\bar\zeta) \ge \lambda^{\star}\bar\zeta  - \psi_{\calP}(\lambda^{\star}),
    \end{align} 
    and on the other hand $E_{\calP}\p{\bar\zeta} = \frac{2\log(2\alpha e)}{\alpha n-1}$. Therefore,
    \begin{align}
        \bar\zeta \le \frac{\psi_{\calP}(\lambda^\star)}{\lambda^\star} + \frac{2\log(2\alpha e)}{\lambda^{\star}(\alpha n-1)}. \label{eq:condition_tosimplify}
    \end{align}
    Now, let us find the Maclaurin series of $\psi_{\calP}(\lambda)$. Note that
    \begin{align}
        \lim_{\lambda \to 0} \psi_{\calP}(\lambda) &= 0,\\
        \lim_{\lambda \to 0} \frac{\partial\psi_{\calP}({\partial}\lambda)}{{\partial}\lambda} &= d_{\s{KL}}\p{\calP || \calQ},
    \end{align}
    and,
    \begin{align}
        \lim_{\lambda \to 0} \frac{\partial^2\psi_{\calP}({\partial}\lambda)}{{\partial}\lambda^2} =  \s{Var}_{\calP}\p{\mathscr{L}}.
    \end{align}
    Hence,
    \begin{align}
        \psi_{\calP}(\lambda) = \lambda d_{\s{KL}}(\calP || \calQ) + O\p{\frac{\lambda^2}{2}\s{Var}_{\calP}\p{\mathscr{L}}},
    \end{align}
    and \eqref{eq:condition_tosimplify} can be written as follows
    \begin{align}
        \bar\zeta \le d_{\s{KL}}(\calP || \calQ) + O\p{\lambda^\star\s{Var}_{\calP}\p{\mathscr{L}}} + \frac{2\log(2\alpha e)}{\lambda^{\star}(\alpha n-1)}. 
    \end{align}
\end{proof}
Choosing $\bar\zeta = d_{\s{KL}}(\calP || \calQ) + O\p{\lambda^\star\s{Var}_{\calP}\p{\mathscr{L}}} + \frac{2\log(2\alpha e)} {\lambda^{\star}(\alpha n-1)}$, condition \eqref{eqn:cond5lower} gives
\begin{align}
    d_{\s{KL}}(\calP || \calQ) + O\p{\lambda^\star\s{Var}_{\calP}\p{\mathscr{L}}} + \frac{2\log(2\alpha e)} {\lambda^{\star}(\alpha n-1)} \le \frac{(2-\epsilon)\log n}{n}.
\end{align}
We note that for $\omega\p{1/\log n}=\lambda^\star = o\p{1}$ it holds that $\omega(n^{-1})=\frac{2\log(2\alpha e)} {\lambda^{\star}(\alpha n-1)} = o\p{\frac{\log n}{n}}$, and therefore, we get the condition,
\begin{align}
 d_{\s{KL}}(\calP || \calQ) + O\p{\lambda^\star\s{Var}_{\calP}\p{\mathscr{L}}} \le \frac{(2-\epsilon')\log n}{n}.
\end{align}

\subsection{Proof of Theorem~\ref{thm:GLRT}}

Recall the GLRT in \eqref{eqn:testscanmain}, and let us analyze the Type-I error probability. Let $\lambda\geq0$. We have,
\begin{align}
    \calE_1&\triangleq\pr_{\calH_0}\pp{\phi_{\s{GLRT}}(\s{A},\s{B})=1} \\
    &= \pr_{\calH_0}\pp{\max_{\pi\in\mathbb{S}_n}\sum_{1\leq i<j\leq n}\log \calL(\s{A}_{ij},\s{B}_{\pi_i\pi_j})\geq n^{(2)}\cdot\tau_{\s{GLRT}}}\\
    & = \pr_{\calH_0}\pp{\bigcup_{\pi\in\mathbb{S}_n}\sum_{1\leq i<j\leq n}\log \calL(\s{A}_{ij},\s{B}_{\pi_i\pi_j})\geq n^{(2)}\cdot\tau_{\s{GLRT}}}\\
    &\leq n!\cdot\pr_{\calH_0}\pp{\sum_{1\leq i<j\leq n}\log \calL(\s{A}_{ij},\s{B}_{ij})\geq n^{(2)}\cdot\tau_{\s{GLRT}}}\\
    &\leq n!\cdot e^{-n^{(2)}\cdot\lambda\tau_{\s{GLRT}}}\bE_{\calH_0}\pp{\exp\p{\lambda\sum_{1\leq i<j\leq n}\log \calL(\s{A}_{ij},\s{B}_{ij})}}\\
    &= n!\cdot e^{-n^{(2)}\cdot\lambda\tau_{\s{GLRT}}}\pp{\bE_{\calH_0}\pp{\exp\p{\lambda\log\calL(\s{A}_{11},\s{B}_{11})}}}^{n^{(2)}}\\
    & = n!\cdot e^{-n^{(2)}\cdot\lambda\tau_{\s{GLRT}}}\pp{\bE_{\calQ}\pp{\exp\p{\lambda\log\frac{\calP({A},{B})}{\calQ({A},{B})}}}}^{n^{(2)}}\\
    &= \exp\pp{\log n!-n^{(2)}\cdot\lambda\tau_{\s{GLRT}}+n^{(2)}\cdot\psi_{\calQ}(\lambda)},
\end{align}
where the first inequality follows from the union bound and $|\mathbb{S}_n|=n!$, and the second inequality is by Chernoff's bound. For $\tau_{\s{GLRT}}\in(-d_{\s{KL}}(Q||P),d_{\s{KL}}(P||Q))$, we may take $\lambda\geq0$ so that $\lambda\tau_{\s{GLRT}}-\psi_{\calQ}(\lambda)$ is arbitrarily close to $E_{\calQ}(\tau_{\s{GLRT}})$. This implies that
\begin{align}
  \calE_1\leq \exp\pp{n\log \frac{n}{e}+\log n+1-n^{(2)}\cdot E_{\calQ}(\tau_{\s{GLRT}})},  
\end{align}
where we have used Stirling approximation that $n!\leq en^{n+1}e^{-n}$. Therefore, we see that if,
\begin{align}
    E_{\calQ}(\tau_{\s{GLRT}})\geq \frac{2\log (n/e)}{n-1}+\frac{2(1+\log n)}{n(n-1)}+\omega(n^{-2}),\label{eqn:condScan}
\end{align}
%\begin{align}
%   \frac{\log (n/e)}{d}+\frac{1+\log n}{dn}<E_{\calQ}(\tau_{\s{GLRT}}),\label{eqn:condScan}
%\end{align}
then the Type-I error probability goes to zero. 
%Next, we take $\tau_{\s{GLRT}}\triangleq d\cdot n\cdot (1-\delta_n)\cdot d_{\s{KL}}(\calP||\calQ)$, for some $\delta_n>0$, and $\lambda = \frac{\tau_{\s{GLRT}}}{nd}$. Then, we get 
%\begin{align}
%    \calE_1 &\leq\exp\pp{\log n!-\lambda\tau_{\s{GLRT}}+dn\log\bE_{\calQ}\p{e^{\lambda\log\frac{\calP(A,B)}{Q_{XY}(A,B)}}}}\\
%    & = \exp\pp{\log n!-dn(1-\delta_n)^2\cdot d_{\s{KL}}^2(\calP||Q_{XY})+dn\log\bE_{Q_{XY}}\p{e^{\lambda\log\frac{\calP(A,B)}{Q_{XY}(A,B)}}}}.
%\end{align}
%Thus, using Stirling's approximation, we see that if,
%\begin{align}
   %\frac{\log n}{d}<(1-\delta_n)^2d_{\s{KL}}^2(\calP||Q_{XY})-\log\bE_{Q_{XY}}\pp{\frac{\calP(A,B)}{Q_{XY}(A,B)}}^{d_{\s{KL}}(\calP||Q_{XY})},\label{eqn:condScan}
%\end{align}
%then the Type-I error probability goes to zero. 
Next, we bound the Type-II error probability as follows. Under $\calH_1$, since our proposed test is invariant to reordering of $\s{A}$ and $\s{B}$, we may assume without loss of generality that the latent permutation is the identity one, i.e., $\sigma=\s{Id}$. Then, for $\lambda\leq0$,
\begin{align}
    \calE_2&\triangleq\pr_{\calH_1}\pp{\phi_{\s{GLRT}}(\s{A},\s{B})=0}\\
    &= \pr_{\calH_1}\pp{\max_{\pi\in\mathbb{S}_n}\sum_{1\leq i<j\leq n}\log \calL(\s{A}_{ij},\s{B}_{\pi_i\pi_j})< n^{(2)}\cdot\tau_{\s{GLRT}}}\\
    & = \pr_{\calH_1}\pp{\sum_{1\leq i<j\leq n}\log \calL(\s{A}_{ij},\s{B}_{ij})< n^{(2)}\cdot\tau_{\s{GLRT}}}\\
    &\leq e^{-n^{(2)}\cdot\lambda\tau_{\s{GLRT}}}\bE_{\calH_1}\pp{\exp\p{\lambda\sum_{1\leq i<j\leq n}\log \calL(\s{A}_{ij},\s{B}_{ij})}}\\
    & = e^{-n^{(2)}\cdot\lambda\tau_{\s{GLRT}}}\pp{\bE_{\calP}\pp{\exp\p{\lambda\log\frac{\calP({A},{B})}{\calQ({A},{B})}}}}^{n^{(2)}}\\
    & = \exp\pp{n^{(2)}\cdot\psi_{\calP}(\lambda)-n^{(2)}\cdot\lambda\tau_{\s{GLRT}}}.
\end{align}
Again, because $\tau_{\s{GLRT}}\in(-d_{\s{KL}}(\calQ||\calP),d_{\s{KL}}(\calP||\calQ))$, we may take $\lambda\leq0$ so that $\lambda\tau_{\s{GLRT}}-\psi_{\calP}(\lambda)$ is arbitrarily close to $E_{\calP}(\tau_{\s{GLRT}})$. Therefore,
\begin{align}
    \calE_2\leq\exp\p{-n^{(2)}\cdot E_{\calP}(\tau_{\s{GLRT}})},
\end{align}
and thus if,
\begin{align}
    E_{\calP}(\tau_{\s{GLRT}})=\omega(n^{-2}),
\end{align}
then the Type-II error probability goes to zero. 

\subsection{Proof of Theorem~\ref{thm:sumTest}}
We assume { $\s{corr}(\calP,\calQ)\triangleq\frac{\s{cov}_{\calP}({A},{B})}{\s{Var}_{\calQ}(;{A})}>0$}. The complementary case follows from the same arguments. For simplicity of notation we define $\calT(\s{A},\s{B})\triangleq\sum_{(i,j)\in\Lambda}(\s{A}_{ij}-\s{B}_{ij})${, where $\Lambda\triangleq\{(i,j):1\leq i<j\leq n\}$}. Then, note that $\bE_{\calH_0}[\calT(\s{A},\s{B})] = 0$, and $\s{Var}_{\calH_0}(\calT(\s{A},\s{B})) = n(n-1)\cdot\s{Var}_{\calQ}({A})$. Similarly, $\bE_{\calH_1}[\calT(\s{A},\s{B})] = 0$, and $\s{Var}_{\calH_1}[\calT(\s{A},\s{B})] = n(n-1)\cdot\s{Var}_{\calQ}({A})\cdot(1-\s{corr}(\calP,\calQ))$. Next, we use Berry-Esseen theorem to approximate $\calT(\s{A},\s{B})$ by a Gaussian random variables, under $\calH_0$ and $\calH_1$. Let $G_0 \sim \calN(0,n(n-1)\cdot\s{Var}_{\calQ}({A}))$ and $G_1\sim\calN(0,n(n-1)\cdot\s{Var}_{\calQ}({A})\cdot(1-\s{corr}(\calP,\calQ)))$. Then, using \cite[Theorem 5.5]{Valentin}, we have
    \begin{align}
        \alpha\triangleq\pr_{\calH_0}{\p{|\calT(\s{A},\s{B})|\geq\theta}} &\geq \pr(|G_0|\geq\theta)-O\p{\frac{\bE_{\calQ}{\pp{|{A}-{B}|^3}}}{n\cdot\s{Var}^{3/2}_{\calQ}({A})}}.\label{eqn:alphaBound}
    \end{align}
    Similarly,
    \begin{align}
        \beta\triangleq&\pr_{\calH_1}{\p{|\calT(\s{A},\s{B})|\geq\theta}}\leq \pr(|G_1|\geq\theta) +O\p{\frac{\bE_{\calP}{\pp{|{A}-{B}|^3}}}{n\cdot\s{Var}^{3/2}_{\calQ}({A})(1-\s{corr}(\calP,\calQ))^{3/2}}}.\label{eqn:betaBound}
    \end{align}
    Now note that,
    \begin{align}
        &\pr(|G_0|\geq\theta)-\pr(|G_1|\geq\theta) \geq d_{\s{TV}}\p{\calN(0,1)||\calN(0,1-\s{corr}(\calP,\calQ))}= \Omega(1),
    \end{align}
    where the inequality is because of the definition of the total variation distance, and the last equality is because $\s{corr}(\calP,\calQ)=\Omega(1)$. Furthermore, under the Theorem statement, both terms at the r.h.s. of \eqref{eqn:alphaBound} and \eqref{eqn:betaBound} are $o(1)$. Thus, noticing that $\s{R}_n(\phi_{\s{sum}}) = 1-(\alpha-\beta)$, and combining the last two results we obtain that $\s{R}(\phi_{\s{sum}})<1$, as $n\to\infty$.

\subsection{Proof of Theorem~\ref{thm:gap_gen}}
Recall 
the sequence of orthogonal univariate polynomials with respect to the marginal distribution of $\calP_{AB}$,
$\{\mathscr{P}_\ell\}_{\ell\geq0}$, such that $\mathscr{P}_{\ell}$ is a polynomial of order $\ell$. %For some fixed value  $\s{D}$, suppose that $\mathscr{F}_0,\ldots,\mathscr{F}_m$ is an orthonormal univariate polynomial basis for coordinate-degree $\s{D}$ function with respect to $\inprod{\cdot,\cdot}_{\calH_0}$. That is $\inprod{\mathscr{F}_{\alpha}, \mathscr{F}_{\beta}}_{\calH_0} \triangleq \E_{\calH_0}\pp{\mathscr{F}_{\alpha}\cdot \mathscr{F}_{\beta}} = \delta_{\alpha-\beta}$. 
The multivariate polynomials in $n\times n$ variables are indexed by $\theta \in \mathbb{N}^{n\times n}$, and are merely products of the univariate polynomials, i.e., $F_{\theta}(\s{A})\triangleq\prod_{0< i<j\le n} \mathscr{P}_{\theta_{i,j}}(\s{A}_{i,j})$. For some fixed value  $\s{D}$, the norm of ${\s{L}_n}_{,\le \s{D}}$ can be decomposed into its orthogonal components up to degree $\s{D}$ using the basis formed by the family of the multivariate polynomials. That is,
%So, we can measure the norm of $\s{L}_n^{\le D}$ in this basis as follows
\begin{align}
    \norm{\s{L}_{n}(\s{A},\s{B})_{\le \s{D}}}^2 
    &= \sum_{d=0}^{\s{D}}\sum_{\substack{\alpha,\beta \in \mathbb{N}^{n\times n}\\ \abs{\alpha}+\abs{\beta}=d}} \inprod{\s{L}_n(\s{A}, \s{B}), F_{\alpha,\beta}(\s{A},\s{B})}^2_{\calH_0}. \label{eq:LR_D_measure}
\end{align}
Let us focus on the projection coefficients in \eqref{eq:LR_D_measure}.
\begin{align}
    \inprod{\s{L}_n(\s{A}, \s{B}), F_{\alpha,\beta}(\s{A},\s{B})}_{\calH_0} 
    &= \E_{\calH_0}\pp{\s{L}_{n}(\s{A},\s{B})F_{\alpha,\beta}(\s{A},\s{B})} \\
    &= \E_{\calH_1}\pp{F_{\alpha,\beta}(\s{A},\s{B})}\\
    &= \E_{\pi}\pp{\E_{\calH_1\vert\pi}\pp{F_{\alpha,\beta}(\s{A},\s{B})}}\\
    &= \E_{\pi}\pp{\E_{\calH_1\vert\pi}\pp{\prod_{0<i<j\le n}\mathscr{P}_{\alpha_{ij}}(\s{A}_{ij})\mathscr{P}_{\beta_{\pi_i\pi_j}}(\s{B}_{\pi_i\pi_j})}}\\
    &= \E_{\pi}\pp{\prod_{0<i<j\le n}\E_{\calH_1\vert\pi}\pp{\mathscr{P}_{\alpha_{ij}}(\s{A}_{ij})\mathscr{P}_{\beta_{\pi_i\pi_j}}(\s{B}_{\pi_i\pi_j})}}, \label{eq:general_case_decomposition}
\end{align}
where the last equality follows from the independence of the pairs $(\s{A}_{ij}, \s{B}_{\pi_i\pi_j})_{i<j}$ conditioned on the hidden permutation $\pi$.

In order to simplify \eqref{eq:general_case_decomposition} for general distributions, further assumptions on $(\calP,\calQ)$ are needed. Thus, we assume the class of distributions $\calD_3$ %Specifically, let $\{\mathscr{P}_\ell\}_{\ell\geq0}$ be a sequence of orthogonal polynomials with respect to the marginal distribution of $\calP_{\s{AB}}$, such that $\mathscr{P}_{\ell}$ is a polynomial of order $\ell$. Consider the class of distributions of $\calP$, 
such that for each $\ell\in\mathbb{N}$, the projection $\mathbb{E}_{\calP}[{B}^{\ell}\vert {A}=a]$ is a polynomial of degree $\ell$ in $a$ with a leading coefficient $\kappa_{\ell}\le \abs{\s{corr}(\calP,\calQ)}^{\ell}$. %Consider the class of distributions of $\calP$, such that for each $\ell\in\mathbb{N}$, the projection $\E\pp{\mathscr{P}_{\ell}(\s{B})\vert \s{A}} = \s{corr}(\calP,\calQ)^{\ell}\mathscr{P}_{\ell}(\s{A})$.
%$\mathbb{E}_{\calP}[B^{\ell}\vert A=a]$ is a polynomial of degree $\ell$ in $a$. 

\begin{lemma}\label{lem:class3_prop}
For the class of distributions $\calD_3$, it holds that
\begin{align}
    \mathbb{E}_{\calP}[\mathscr{P}_n({B})\vert {A}] = \kappa_n\mathscr{P}_n({A}),
\end{align}
where $\kappa_n$ is the leading coefficient of $\mathbb{E}_{\calP}[{B}^{n}\vert {A}=a]$. 
\begin{comment}
Also,
\begin{align}
    \s{corr}(A,B)=\norm{\kappa}_\infty=\sup_{n\geq1}|\kappa_n|.\label{eqn:maxCorr}
\end{align}
\end{comment}
\end{lemma}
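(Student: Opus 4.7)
The plan is to reduce the claim to a combination of (i) polynomial-degree bookkeeping and (ii) orthogonality against lower-degree polynomials, with the latter being obtained via the symmetry $\calP(a,b)=\calP(b,a)$ that is built into the model.

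First I would write $\mathscr{P}_n(b)=\sum_{k=0}^n c_k b^k$ with $c_n\neq 0$ the leading coefficient, and define $R_n(a)\triangleq \E_\calP[\mathscr{P}_n({B})\mid {A}=a]$. By linearity and the defining property of $\calD_3$, each $\E_\calP[{B}^k\mid{A}=a]$ is a polynomial of degree $k$ in $a$ with leading coefficient $\kappa_k$, so $R_n(a)$ is a polynomial of degree exactly $n$, with leading coefficient $c_n\kappa_n$. The goal is then to show that $R_n$ is proportional to $\mathscr{P}_n$, in which case matching leading coefficients forces the constant of proportionality to be $\kappa_n$.

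The key step is orthogonality: I would show that for every $m<n$,
\begin{align}
    \E[R_n({A})\mathscr{P}_m({A})] = 0,
\end{align}
where the outer expectation is under the common marginal. Using the tower property this equals $\E_\calP[\mathscr{P}_n({B})\mathscr{P}_m({A})]$. Here I invoke the fact (established earlier in the paper, via $\calL(x,y)=\calL(y,x)$ together with $\calQ=\calP_A\times\calP_B$ having equal marginals) that the joint law $\calP_{AB}$ is symmetric in its two arguments, so swapping roles gives $\E_\calP[\mathscr{P}_n({B})\mathscr{P}_m({A})] = \E_\calP[\mathscr{P}_n({A})\mathscr{P}_m({B})]$. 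Applying the tower property the other way around yields $\E[\mathscr{P}_n({A})R_m({A})]$, where $R_m$ is (by the same argument as in the first step) a polynomial of degree $m<n$. By the orthonormality of $\{\mathscr{P}_\ell\}$ with respect to the common marginal, $\mathscr{P}_n$ is orthogonal to every polynomial of degree $<n$, so this expectation vanishes.

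Finally, since $R_n$ lies in the $(n+1)$-dimensional space of polynomials of degree at most $n$ and is orthogonal to $\mathscr{P}_0,\dots,\mathscr{P}_{n-1}$, the orthonormal expansion $R_n = \sum_{k=0}^n \E[R_n({A})\mathscr{P}_k({A})]\,\mathscr{P}_k$ collapses to $R_n = \alpha\,\mathscr{P}_n$ for some scalar $\alpha$. Comparing the leading coefficients ($c_n\kappa_n$ on the left, $\alpha c_n$ on the right) gives $\alpha=\kappa_n$, completing the proof. The one potentially delicate point is justifying the symmetry $\calP(a,b)=\calP(b,a)$, but this is a standing assumption of the paper, so no additional work is required.
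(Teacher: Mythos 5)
Your proof is correct, and its skeleton matches the paper's: show that $\E_{\calP}[\mathscr{P}_n(B)\vert A]$ is a polynomial of degree at most $n$ in $A$ that is orthogonal, under the common marginal, to everything of degree below $n$, hence proportional to $\mathscr{P}_n$, and then identify the constant. The differences are in two details. First, the paper establishes the orthogonality by testing against the monomials $A^\ell$, $\ell<n$, conditioning on $B$, and invoking the other half of the $\calD_3$ assumption, namely that $\E_{\calP}[A^\ell\vert B=b]$ is a degree-$\ell$ polynomial in $b$; you instead test against $\mathscr{P}_m(A)$, $m<n$, and use the exchangeability $\calP(a,b)=\calP(b,a)$ to swap the roles of $A$ and $B$, so you only ever need the $\E_{\calP}[B^\ell\vert A]$ half of the assumption. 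Since the paper assumes $\calL(x,y)=\calL(y,x)$ and $\calQ$ is a symmetric product measure, the symmetry of $\calP_{AB}$ is indeed available, so this step is legitimate -- and it is arguably what underlies the paper's own assertion that $\E_{\calP}[A^\ell\vert B=b]$ is a degree-$\ell$ polynomial. Second, your identification of the constant by matching the coefficient of $a^n$ ($c_n\kappa_n$ against $\alpha c_n$) pins down $\alpha=\kappa_n$ (the leading coefficient of $\E_{\calP}[B^n\vert A=a]$) more directly than the paper's closing computation, which identifies the proportionality constant with $\zeta_n=\E[\mathscr{P}_n(A)\mathscr{P}_n(B)]$. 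One cosmetic caveat: when $\kappa_n=0$ (as happens in the paper's Bernoulli example for $\ell\geq1$), $R_n$ has degree strictly less than $n$, so your claim of degree \emph{exactly} $n$ should read degree \emph{at most} $n$; the leading-coefficient comparison and the conclusion $R_n=\kappa_n\mathscr{P}_n$ are unaffected.
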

Before we get into the proof of Lemma \ref{lem:class3_prop}, we first utilize it to conclude Theorem~\ref{thm:gap_gen}. 
Let ${A}$ be drawn from $\calP_{{A}}$, then for the class of $\calD_3$, Lemma \ref{lem:class3_prop} states that
\begin{align}
    \E_{\calH_1\vert\pi}\pp{\mathscr{P}_{\alpha_{ij}}(\s{A}_{ij})\mathscr{P}_{\beta_{\pi_i\pi_j}}(\s{B}_{\pi_i\pi_j})} 
    &= \E_{{A}\sim\calP_{{A}}}\pp{\mathscr{P}_{\alpha_{ij}}({A})
    \kappa_{\beta_{\pi_i\pi_j}} \mathscr{P}_{\beta_{\pi_i\pi_j}}({A})}\\
    &= \kappa_{\beta_{\pi_i\pi_j}} \delta(\alpha_{i,j} - \beta_{\pi_i,\pi_j}),
\end{align}
where $\kappa_{\beta_{\pi_i\pi_j}}$ are the leading coefficients of the polynomial expansions of the conditional expectations $\mathbb{E}_{\calP}[B^{\beta_{\pi_i\pi_j}}\vert A=a]$. 
Thus, we get
\begin{align}
    \inprod{\s{L}_n(\s{A}, \s{B}), F_{\alpha,\beta}(\s{A},\s{B})}_{\calH_0} 
    &= \E_{\pi}\pp{\prod_{0<i<j\le n}\E_{\calH_1\vert\pi}\pp{\mathscr{P}_{\alpha_{ij}}(\s{A}_{ij})\mathscr{P}_{\beta_{\pi_i\pi_j}}(\s{B}_{\pi_i\pi_j})}} \\
    &= \E_{\pi}\pp{\prod_{0<i<j\le n}\kappa_{\beta_{\pi_i\pi_j}}\delta(\alpha_{ij}-\beta_{\pi_i\pi_j})}.
\end{align}

Substituting this back into~\eqref{eq:LR_D_measure}, we get 
\begin{align}
    \norm{\s{L}_{n}(\s{A},\s{B})_{\le \s{D}}}^2 
    &= \sum_{d=0}^{\s{D}}
    \sum_{\substack{\alpha,\beta \in \mathbb{N}^{n\times n}, \\ \abs{\alpha}+\abs{\beta}=d}} \p{\E_{\pi}\pp{\prod_{0<i<j\le n}\kappa_{\beta_{\pi_i\pi_j}}\delta(\alpha_{ij}-\beta_{\pi_i\pi_j})}}^2. \label{eq:norm_LDP}
\end{align}
%The leading coefficient holds $\kappa_n \approx \s{corr}(\calP,\calQ)$. 

The coefficients $\kappa_{\beta_{\pi_i\pi_j}}$ play a crucial role in quantifying the strength of the relationship between ${A}$ and ${B}$, as they represent the leading factors in the polynomial expansions of the conditional expectation $\E_{\calP}\pp{{B}^{\beta_{\pi_i\pi_j}}\vert {A}=a}$. 
For distributions in $\calD_3$, %Lemma~\ref{lem:class3_prop} suggests that,
we have that $\kappa_{\beta_{\pi_i\pi_j}} \le \abs{\s{corr}(\calP,\calQ)}^{\beta_{\pi_i\pi_j}}$. Therefore, we get
\begin{align}
 \E_{\pi}\pp{\prod_{0<i<j\le n}\kappa_{\beta_{\pi_i\pi_j}}\delta(\alpha_{ij}-\beta_{\pi_i\pi_j})} &= \prod_{0<i<j\le n}\kappa_{\alpha_{ij}}\cdot\E_{\pi}\pp{\prod_{0<i<j\le n}\delta(\alpha_{ij}-\beta_{\pi_i\pi_j})}\\
 &=\prod_{0<i<j\le n}\kappa_{\alpha_{ij}}\cdot\E_{\pi}\pp{\Ind_{\ppp{\pi(\alpha)=\beta}}}\\
&\le\abs{\s{corr(\calP,\calQ)}}^{\abs{\alpha}}\P\p{\pi(\alpha)=\beta}.
\label{eq:corr_LDP}
\end{align}
The permutation $\pi$ is a permutation on the vertex set $[n]$. Thus, $\pi(\alpha)=\beta$ means that the matrix $\beta\in \mathbb{N}^{n\times n}$ is the adjacency matrix of a weighted graph that is isomorphic to the weighted graph that is represented by the adjacency matrix $\alpha\in \mathbb{N}^{n\times n}$ . For $\pi\sim\s{Unif}(\mathbb{S}_n)$, we have
\begin{align}
\P\p{\pi(\alpha)=\beta}&=\frac{1}{\abs{\ppp{\beta\in \mathbb{N}^{n\times n}:\beta\cong \alpha}}}\Ind_{\ppp{\beta\cong \alpha}}.\label{eq:measure_LDP}
\end{align}
Now, combining \eqref{eq:norm_LDP}, \eqref{eq:corr_LDP} and \eqref{eq:measure_LDP} we get

\begin{align}
    \norm{\s{L}_{n}(\s{A},\s{B})_{\le \s{D}}}^2 
    &\le \sum_{d=0}^{\s{D}}
    \sum_{\substack{\alpha,\beta \in \mathbb{N}^{n\times n}, \\ \abs{\alpha}+\abs{\beta}=d}} \s{corr}(\calP,\calQ)^{2\abs{\alpha}}\frac{1}{\abs{\ppp{\beta\in \mathbb{N}^{n\times n}:\beta\cong \alpha}}^2}\Ind_{\ppp{\beta\cong \alpha}}\\&=\sum_{d=0}^{\s{D}}\sum_{\calG:\abs{\s{G}}=d/2}\s{corr}(\calP,\calQ)^{2\abs{\s{G}}}\frac{1}{\abs{\ppp{\beta\in \mathbb{N}^{n\times n}:\beta\cong \s{G}}}^2}\nonumber\\
    &\qquad\qquad\qquad\qquad\qquad\qquad\cdot\abs{\ppp{\alpha,\beta\in \mathbb{N}^{n\times n}:\alpha\cong\beta\cong\s{G}}}\\
    &=\sum_{d=0}^{\s{D}}\sum_{\calG:\abs{\s{G}}=d/2}\s{corr}(\calP,\calQ)^{2\abs{\s{G}}}.
  \end{align}  
 The sum over the set of different isomorphism classes of G is upper bounded by a sum over $k$ and $\ell$ of unweighted graphs with $k$ edges and $\ell$ vertices, multiplied by the number of options to assign weights to the edges such that the total weight of the graph will be $d/2$. The number of options to assign the weights is the number of compositions of $d/2$ into $k$ parts, which is ${{d/2-1}\choose{k-1}}$. We therefore have,

\begin{align}
\sum_{d=0}^{\s{D}}\sum_{\calG:\abs{\s{G}}=d/2}\s{corr(\calQ,\calP)^{2\abs{\s{G}}}}&\le\sum_{d=0}^{\s{D}}\s{corr}(\calP,\calQ)^{d}\sum_{k=0}^{d/2}{{d/2-1}\choose{k-1}}\sum_{\ell=0}^{2k}{{\ell\p{\ell-1}/2}\choose{k}}\label{eq:bound_LDP}\\&\le\sum_{d=0}^{\s{D}}\s{corr}(\calP,\calQ)^{d}\sum_{k=0}^{d/2}{{d/2-1}\choose{k-1}}{{2k^2}\choose{k+1}}\\&\le\sum_{d=0}^{\s{D}}\s{corr}(\calP,\calQ)^{d}\sum_{k=0}^{d/2}\frac{\p{d/2-1}^{k-1}}{(k-1)!}\cdot\frac{\p{2k}^{2k}}{(k-1)!}\\&\le\sum_{d=0}^{\s{D}}\sum_{k=0}^{d/2}\s{corr}(\calP,\calQ)^{2k}\cdot\frac{\p{d}^{k-1}}{(k-1)!}\cdot\frac{\p{d}^{2k}}{(k-1)!}\\&=\sum_{d=0}^{\s{D}}\sum_{k=0}^{d/2}\s{corr}(\calP,\calQ)\cdot\frac{\p{\s{corr}(\calP,\calQ)\cdot d}^{k-1}}{(k-1)!}\cdot\frac{\p{\s{corr}(\calP,\calQ)\cdot d^2}^k}{(k-1)!}\\&\le\sum_{d=0}^{\s{D}}\s{corr}(\calP,\calQ)\cdot d\cdot e^{\s{corr}(\calP,\calQ)\cdot d}\cdot \s{corr}(\calP,\calQ)\cdot d^2\cdot e^{\s{corr}(\calP,\calQ)\cdot d^2}\\&\le\p{\s{corr}(\calP,\calQ)\cdot \s{D}^2}^2\cdot e^{\s{corr}(\calP,\calQ)\cdot\s{D}}\cdot e^{\s{corr}(\calP,\calQ)\cdot \s{D}^2}.
\end{align}
Thus, 
  \begin{align}  
  \norm{\s{L}_{n}(\s{A},\s{B})_{\le \s{D}}}^2 
    &\le\p{\s{corr}(\calP,\calQ)\cdot \s{D}^2}^2\cdot e^{\s{corr}(\calP,\calQ)\cdot \s{D}}\cdot e^{\s{corr}(\calP,\calQ)\cdot \s{D}^2}.
\end{align}
We can see that $\norm{\s{L}_{n}(\s{A},\s{B})_{\le \s{D}}}^2 \leq O(1)$, for any $\s{D}=O(|\s{corr}(\calP,\calQ)|^{-1/2})$. It is now left to prove Lemma \ref{lem:class3_prop}.

\begin{proof}[Proof of Lemma \ref{lem:class3_prop}]

We start with a few simple observations. Let $\zeta_\ell\triangleq\bE\pp{\calP_\ell(A)\calP_\ell(B)}$. Then, since we assume that $\calP,\calQ\in\calD_3$, namely, for each $\ell\in\mathbb{N}$, the projection $\mathbb{E}_{\calP}[B^{\ell}\vert A=a]$ is a polynomial of degree $\ell$ in $a$, we have for $\ell\neq m$
\begin{align}
    \bE\pp{\calP_\ell(A)\calP_m(B)} &= \bE\pp{\calP_\ell(A)\bE\p{\calP_m(B)\vert A}}\\
    & = \bE\pp{\calP_\ell(A)\s{pol}_{m}(A)}\\
    &=0,
\end{align}
where the second equality is because $\calP,\calQ\in\calD_3$ and $\s{pol}_m(A)$ designates a polynomial of order $m$ in $A$, and the last equality follows from the fact that $\{\calP_m\}$ is a set of orthonormal polynomials. To conclude, 
\begin{align}
    \bE\pp{\calP_\ell(A)\calP_m(B)} = \zeta_n\cdot\delta(m-\ell).
\end{align}
Recall that $\kappa_\ell$ is defined as the leading coefficient of $\mathbb{E}_{\calP}[B^{\ell}\vert A=a]$. We want to prove that 
\begin{align}
    \mathbb{E}_{\calP}[\mathscr{P}_n(B)\vert A] = \kappa_n\mathscr{P}_n(A),\label{eqn:1stLemma5}
\end{align}
and we remind that we assume that for each $\ell\in\mathbb{N}$, the projections $\mathbb{E}_{\calP}[B^{\ell}\vert A=a]$ and $\mathbb{E}_{\calP}[A^{\ell}\vert B=b]$ are polynomials of degree $\ell$ in $a$ and $b$, respectively. This assumption clearly implies that $\mathbb{E}_{\calP}[\mathscr{P}_n(B)\vert A]$ is a polynomial with degree $n$, with probability one. Now, note that
\begin{align}
    \bE\pp{A^\ell\bE\pp{\mathscr{P}_n(B)\vert A}} &=\bE\pp{A^\ell\mathscr{P}_n(B)}\\
    & = \bE\pp{\mathscr{P}_n(B)\bE\pp{\left.A^\ell\right| B}},
\end{align}
for $\ell=0,1,\ldots,n-1$. However, since we assume that 
$\bE\pp{\left.A^\ell\right| B=b}$ is a polynomial of degree $\ell$, then it follows by the orthogonality of $\{\mathscr{P}_n\}$ that,
\begin{align}
    \bE\pp{\mathscr{P}_n(B)\bE\pp{\left.A^\ell\right| B}}=0,
\end{align}
and so $\bE\pp{A^\ell\bE\pp{\mathscr{P}_n(B)\vert A}}=0$ as well, for each $\ell=0,1,\ldots,n-1$. This in turn implies that $\bE\pp{\mathscr{P}_n(B)\vert A}$ must be proportional to $\mathscr{P}_n(A)$, i.e., for some constant $\kappa_n$ it holds that $\mathbb{E}_{\calP}[\mathscr{P}_n(B)\vert A] = \kappa_n\mathscr{P}_n(A)$, as required. Finally, we show that $\zeta_\ell = \kappa_\ell$. Indeed, from \eqref{eqn:1stLemma5}, we have
\begin{align}
    \zeta_\ell &= \bE\pp{\calP_\ell(A)\bE\p{\calP_\ell(B)\vert A}}\\
    & = \kappa_\ell\bE\pp{\calP^2_\ell(A)}\\
    & = \kappa_\ell,
\end{align}
which concludes the proof.

\end{proof}

\section{Conclusion and Outlook}

In this paper, we analyzed the problem of detecting the dependency weighted random graphs. To that end, we derived both lower and upper bounds on the phase transition boundaries at which detection is possible or impossible. We show that these bounds are tight in several special cases considered in the literature, and in other distributions as well. We observed a gap between the statistical limits we derive and the performance of the efficient algorithms we construct.
We conjecture that this gap is in fact inherent, and to provide an evidence for this conjecture we
follow show that the class of low-degree polynomials
fail the conjectureally hard regime. We hope our work has opened more doors than it closes. Let us mention briefly a few interesting questions for future research:
    \begin{enumerate}
        \item It would be interesting to devise universal algorithms which are independent of the underlying generative distributions, which assumed known in our paper.
        \item In this paper, we investigated the average-case/noisy version of the famous worst-case graph isomorphism problem; accordingly it would be interesting and important to analyze the related subgraph isomorphism problem, where only a subgraph (say, chosen uniformly at random) in the original graph is correlated with a node-permuted version in the other graph.
        \item Similarly to \cite{GraphAl5}, it is quite important to devise efficient algorithms that achieve strong detection (rather than weak detection as we proposed in our paper).
    \end{enumerate}

\bibliographystyle{ieeetr}
\bibliography{bibfile}

%\lipsum[1]

\appendix
\section{Distributions in $\calD_3$} \label{sec:Appendix}
\subsection{Stable distributions}
The set of stable distributions is broad, encompassing many well-known distributions such as the Normal, Cauchy, L\'evy, among others. We recall the definition of a stable distribution.
\begin{definition}[Stable distribution]
    Let $X_1$ and $X_2$ be independent copies of a random variable $X$. Then, $X$ is said to be stable if for any constants $\alpha,\beta>0$ the combination $\alpha X_1+\beta X_2$ has the same distribution as $\gamma X+\nu$, for some constants $\gamma>0$ and $\nu\in\mathbb{R}$. 
\end{definition}
Let ${A}$ and ${W}$ be two independent stable random variables with identical parameters $\alpha$, $\beta$, $\mu$, and $\gamma$, where $\alpha$ is the stability parameter, $\beta$ is the skewness parameter, $\mu$ is the location parameter, and $\gamma$ is the scale parameter. For simplicity, we assume $\mu =0$, though the results can be easily extended to the general case where $\mu$ is nonzero.
Define ${B} \triangleq c_1{A} + c_2{W}$. By the properties of stable distributions, ${B}$ is also stable with the same stability and skewness parameters, $\alpha$ and $\beta$. The scale parameter becomes $(\abs{c_1}^{\alpha}+\abs{c_2}^{\alpha})^{1/\alpha}\gamma$, whereas the location parameter remains $(c_1+c_2)\mu = 0$. To ensure that ${A}$ and ${B}$ have identical marginal distributions, we impose the condition $\abs{c_1}^{\alpha}+\abs{c_2}^{\alpha}=1$.
%two conditions must hold: $c_1+c_2=1$, and $\abs{c_1}^{\alpha}+\abs{c_2}^{\alpha}=1$. 
Next, we show that the correlation coefficient between ${A}$ and ${B}$ is  $c_1$. Indeed, due to the independence of ${A}$ and ${W}$, we have
\begin{align}
    \rho \triangleq \s{corr}(\calP, \calQ) 
    &= \frac{\s{cov}({A},{B})}{\Var({A})} 
    = \frac{\E\pp{{A}{B}}}{\Var({A})}
    = \frac{\E\pp{{A}(c_1 {A} + c_2 {W})}}{\Var\p{{A}}}
    = \frac{c_1A^2}{\Var({A})}
    = c_1.
\end{align}
\begin{comment}
\begin{align}
    \rho \triangleq \s{corr}(\calP, \calQ) 
    &= \frac{\s{cov}({A},{B})}{\Var({A})} \\
    &= \frac{\E\pp{{A}{B}} - \E\pp{{A}}\E\pp{{W}}}{\Var({A})}\\
    &= \frac{\E\pp{{A}(c_1 {A} + c_2 {W})} - \E\pp{{A}}(c_1\E\pp{{A}} + c_2 \E\pp{{W}})}{\Var\p{{A}}}\\
    &= \frac{c_1\E\pp{{A}^2}+c_2\E\pp{{A}}\E\pp{{W}} - c_1 \E^2\pp{{A}} - c_2 \E\pp{{A}}\E\pp{{W}}}{\Var({A})} \\
    &= \frac{c_1(\E\pp{{A}^2} - \E^2\pp{{A}})}{\Var({A})}\\
    &= c_1.
\end{align}
\end{comment}
Thus, we set $c_1 =\rho$, and $c_2 = \pm (1 - \abs{\rho}^{\alpha})^{1/\alpha}$. %{and assume that the pair $\rho$ and $\alpha$ satisfies the condition $(\abs{c_1}^{\alpha}+\abs{c_2}^{\alpha})^{1/\alpha}=1$.}
We now compute the conditional expectation,
\begin{align}
    \E\pp{{B}^{\ell} \vert {A}=a} 
    &= \E\pp{(\rho{A} + (1-\rho){W})^{\ell} \vert {A}=a} \\
    &= \E\pp{\sum_{k=0}^{\ell} \binom{\ell}{k}(\rho {A})^{\ell-k}\p{(1-\rho) {W}}^{k} \vert {A}=a} \\
    &= \rho^{\ell}a^\ell + \sum_{k=1}^{\ell} \binom{\ell}{k} \rho^{\ell-k}(1-\rho)^k a^{\ell-k}\E\pp{{W}^k}.
\end{align}
Since $W$ is independent of $A$ and has parameters $\alpha$, $\beta$, $\mu$ and $\gamma$ that are constant w.r.t. $a$, its moments $\E\pp{W^k}$ are also constant w.r.t $a$ for all $k$. As a result, $\E\pp{{B}^{\ell} \vert {A}=a}$ is a polynomial of degree $\ell$ in $a$ with a leading coefficient of $\s{corr}(\calP,\calQ)^{\ell}$. Therefore, the class $\calD_3$ properties are satisfied for the stable family of distributions considered.

\subsection{Bernoulli distribution} \label{App:Bernoulli}
Recall the definition of the Bernoulli case in Example \ref{exmp:2}; in this case, we have ${A}\sim\s{Bern}(ps)$, and, 
\begin{align}
    {B}\vert A \triangleq \begin{cases}
        \s{Bern}(s), \quad &{A}=1\\
        \s{Bern}\p{\frac{ps(1-s)}{1-ps}}, \quad &{A}=0.
    \end{cases}
\end{align}
It is not difficult to check that the marginal distribution of ${B}$ is identical to ${A}$. 
%Let us calculate the probabilities that B takes on each of its two possible values, $0$ and $1$. By the law of total probability we have,
%\begin{align}
%    \P({B}=1) &= \P({B}=1 \vert {A}=1)\P({A}=1) +  \P({B}=1 \vert {A}=0)\P({A}=0) \\
%    &= s\cdot ps + \frac{ps(1-s)}{1-ps} \cdot (1-ps)\\
%    &= ps^2 + ps -ps^2 \\
%    &= ps,
%\end{align}
%and so
%\begin{align}
%    \P({B}=0) = 1 - \P({B}=1) = 1-ps.
%\end{align}
%Hence, ${B} \sim \s{Bern}(ps)$. straightforward
Furthermore, a straightforward calculation reveals that, %The correlation coefficient in this case is given by,
\begin{align}
    \rho \triangleq \s{corr}(\calP,\calQ) 
    %&= \frac{\s{cov}({A},{B})}{\Var({A})} \\
    %&= \frac{\E\pp{{A}{B}} - \E\pp{{A}}\E\pp{{B}}}{\Var({A})}\\
    %&= 
    %\frac{\P({A}=1)\E\pp{{A}{B}\vert {A}=1} + \P({A}=0)\E\pp{{A}{B}\vert {A}=0} - \E^2\pp{{A}}}{\Var({A})}\\
    %&= \frac{ps\E\pp{{B}\vert {A}=1}- \E^2\pp{{A}}}{\Var({A})}\\
    %&= \frac{ps^2 - p^2s^2}{ps(1-ps)}\\
    &= \frac{s(1-p)}{1-ps}. \label{eq:corr_Bernoulli}
\end{align} 
Now, for $\ell=0$, we trivially have that $\E\pp{{B}^0 \vert {A}} = 1 \cdot a^0$, and the leading coefficient is $1 = \s{corr}(\calP,\calQ)^0$. For $\ell\ne 0$, we have
\begin{align}
    \E\pp{{B}^{\ell} \vert {A}=a} 
    &= \begin{cases}
        \E\pp{\s{Bern}(s)}^{\ell}, &a=1\\ 
        \E\pp{\s{Bern}\p{\frac{ps(1-s)}{1-ps}}}^{\ell}, &a=0
    \end{cases} \\
    %&= \begin{cases}
    %    1^{\ell}s + 0^{\ell}(1-s), &a=1\\ 
    %    1^{\ell}\cdot\frac{ps(1-s)}{1-ps} + 0^{\ell}\cdot\p{1-\frac{ps(1-s)}{1-ps}}, &a=0
    %\end{cases}\\
    &= \begin{cases}
        s, &a=1\\ 
        \frac{ps(1-s)}{1-ps}, &a=0.
    \end{cases}
\end{align}
Since $a$ is either 0 or 1, we have that $a^\ell$ is also zero or one, respectively. Therefore, the projection $\E\pp{{B}^{\ell} \vert {A}=a}$ is a polynomial of degree $\ell$ in $a$ for both cases $a=0$ and $a=1$, 
%\textcolor{red}{However, the leading coefficient $\kappa_{\ell}$ is not $\Theta(\s{corr}(\calP,\calQ)^{\ell})$, instead it holds 
%\begin{align}
%    \kappa_{\ell} = \begin{cases}
%        s, &a=1\\
%        \frac{ps(1-s)}{1-ps}, &a=0
%    \end{cases}.
%\end{align}
%Additionally, since $a$ can be either 0 or 1 we can also say that 
with leading coefficients, $\kappa_{\ell} = 0$ for all $\ell \ge 1$ and $\kappa_0 = 1$. Therefore, the Bernoulli distribution above indeed falls inside $\calD_3$, since for $\ell=0$ we have $\kappa_0 = 1 = \s{corr}(\calP,\calQ)^0$, and for all $\ell \ge 1$ we have $0 \le \abs{\s{corr}(\calP,\calQ)}^\ell$.

\subsection{Poisson distribution}
Let ${A}$ be a Poisson random variable with parameter $\lambda$. We define,
\begin{align}
    {B} \triangleq {N} + {M},
\end{align}
where ${N}|A \sim \s{Binomial}({A},\rho)$ and ${M} \sim \s{Poisson}((1-\rho)\lambda)$, with ${M}$ independent of $(A,N)$. 
To show that $B$ follows a Poisson distribution with parameter $\lambda$, we compute its moment generating function (MGF). For $s\in\mathbb{R}$, we have,
\begin{align}
    \E\pp{s^{{B}}} &= \E\pp{\E\pp{s^{{N} + {M}}\vert {A}}} \\
    &= \E\pp{\E\pp{s^{{N}} \vert {A}} \E\pp{s^{{M}}}}\\
    &= \E\pp{(\rho s + 1 - \rho)^{{A}}} e^{(1-\rho)\lambda (s-1)}\\
    &= e^{\lambda(\rho s + 1 -\rho - 1)} e^{(1-\rho)\lambda (s-1)} \\
    &= e^{\lambda (s-1)}, \label{eq:B_poisson}
\end{align}
where the second equality follows from the independence of ${M}$ in $(A,N)$, and the third and fourth equalities follow by using the MGFs of the binomial and Poisson random variables. %$\s{N}_{\s{A}} \vert \s{A}$ and $\s{M}$ respectively.
Therefore, \eqref{eq:B_poisson} 
matches the MGF of a Poisson distribution with parameter $\lambda$, confirming that ${B}\sim \s{Poisson}(\lambda)$. 
Next, we show that $\E\pp{{B}^{\ell}\vert {A}=a}$ is a polynomial of degree $\ell$ in $a$. Indeed,
\begin{align}
    \E\pp{{B}^{\ell}\vert {A}=a} 
    &= \E\pp{({N} + {M})^{\ell}\vert {A} = a}\\
    &= \sum\limits_{k=0}^{\ell} \binom{l}{k}\E\pp{{N}^k\vert {A}=a} \E\pp{{M}^{\ell-k}}. 
\end{align}
Since $M\sim\s{Poisson}((1-\rho)\lambda)$, the term $\E\pp{{M}^{\ell-k}}$ is a constant w.r.t. $a$. %, and its contribution to the leading coefficient $k=\ell$ is $\E\pp{{M}^{\ell-k}} = 1$. %$\E\pp{{M}^{\ell-k}} = \E\pp{{M}^0} = 1$.
Furthermore, the term $\E\pp{{N}^k \vert {A}=a}$ is the $k$th moment of the Binomial random variable $({N} \vert {A}=a)$, and  can be expressed as
\begin{align}
    \E\pp{{N}^k\vert {A}=a} &= \sum\limits_{i=0}^k S(k,i) \rho^i a^i,
\end{align}
where $\{S(k,i)\}$ are the Stirling numbers of the second kind. Putting together, we conclude that $\E\pp{{B}^{\ell}\vert {A}=a}$ is a polynomial of degree $\ell$ in $a$ with a leading coefficient 
\begin{align}
    \kappa_{\ell} = \binom{\ell}{\ell} S(\ell,\ell) \rho^{\ell} = \rho^{\ell}.
\end{align}
Finally, the correlation coefficient between $A$ and $B$ is given by 
\begin{comment}
$\s{corr}(\calP,\calQ) = \frac{\s{cov}\p{{A},{B}}}{\Var(\s{A})}$. Due to the independence of $A$ and $M$, we get
\begin{align}
    \s{Cov}(A, B) 
    = \s{Cov}(A,N) 
    = \E\pp{{A}\E\pp{{N} \vert {A}}}- \E\pp{{A}} \E\pp{\E\pp{{N} \vert {A}}} = \rho\E\pp{{A}^2}- \rho\E^2\pp{{A}} = \rho\Var(A)
\end{align}
\end{comment}
\begin{align}
    \s{corr}(\calP,\calQ) 
    &= \frac{\s{cov}\p{{A},{B}}}{\Var(\s{A})}  \label{eq:corr_calc_first}\\
    &= \frac{\s{cov}\p{{A},{N}}}{\Var({A})}\\
    %&=\frac{\E\pp{A{N}} - \E\pp{{A}} \E\pp{{N}}}{\Var({A})} \\
    &= \frac{\E\pp{{A}\E\pp{{N} \vert {A}}}- \E\pp{{A}} \E\pp{\E\pp{{N} \vert {A}}}}{\Var({A})}\\
    &= \frac{\rho\E\pp{{A}^2}- \rho\E^2\pp{{A}}}{\Var({A})}\\
    %&= \frac{\rho\Var({A}) }{\Var({A})}\\
    &= \rho \label{eq:corr_calc_last},
\end{align}
where the second equality follows the independence of $A$ and $M$.
\begin{comment}
\begin{align}
    \s{corr}(\calP,\calQ) &= \frac{\s{cov}\p{{A},{B}}}{\sqrt{\Var(\s{A})\Var(\s{B})}} \\
    &= \frac{\s{cov}\p{{A},{N}} + \s{cov}({A},{M})}{\Var({A})}\\
    &=\frac{\E\pp{A{N}} - \E\pp{{A}} \E\pp{{N}}}{\Var({A})} \\
    &= \frac{\E\pp{{A}\E\pp{{N} \vert {A}}}- \E\pp{{A}} \E\pp{\E\pp{{N} \vert {A}}}}{\Var({A})}\\
    &= \frac{\rho\E\pp{{A}^2}- \rho\E^2\pp{{A}}}{\Var({A})}\\
    &= \frac{\rho\Var({A}) }{\Var({A})}\\
    &= \rho.
\end{align}
\end{comment}
Consequently, $\kappa_{\ell} = \rho^{\ell} = \s{corr}(\calP,\calQ)^{\ell}$, confirming that the Poisson distribution described is indeed in the class $\calD_3$.

\subsection{Chi-square distribution}
Let $A\sim\chi^2_k$ be a Chi-square random variable with parameter $k$. Define,
\begin{align}
    {B} \triangleq {W} + {Z},
\end{align}
where ${Z} \sim \chi^2_{(1-\rho)k}$ independent of ${A}$, and ${W}=\rho {A}$. It is well-known that the sum of two independent Chi-square distributed random variables $\chi^2_{k_1}$ and $\chi^2_{k_2}$ is distributed as $\chi^2_{k_1+k_2}$. Thus, ${B}={W}+{Z} \sim \chi^2_{k}$. Next, we show that the projection $\E\pp{{B}^{\ell}\vert {A}=a}$ is a polynomial of degree $\ell$ in $a$. Indeed,
\begin{align}
    \E\pp{{B}^{\ell}\vert {A}=a} 
    &= \E\pp{({W} + {Z})^{\ell}\vert {A} = a}\\
    &= \sum\limits_{k=0}^{\ell} \binom{l}{k}\E\pp{{W}^k\vert {A}=a} \E\pp{{Z}^{\ell-k}}\\
    &= \sum\limits_{k=0}^{\ell} \binom{l}{k}\rho^{k}a^k \E\pp{{Z}^{\ell-k}}. 
\end{align}
Note that since ${Z}\sim\chi^2_{(1-\rho)k}$, the term $\E\pp{{Z}^{\ell-k}}$ is a constant w.r.t. $a$. Hence, $\E\pp{{B}^\ell \vert {A}=a}$ is a polynomial of degree $\ell$ in $a$, with a leading coefficient,
\begin{align}
    \kappa_{\ell} = \binom{\ell}{\ell} \rho^{\ell}\E\pp{{Z}^0} =  \rho^{\ell},
\end{align}
It can be verified that the correlation coefficient $\s{corr}(A,B) = \rho$ in a manner analogous to the calculations in \eqref{eq:corr_calc_first}-\eqref{eq:corr_calc_last}. 
\begin{comment}
is given by,
\begin{align}
    \s{corr}(\calP,\calQ) &= \frac{\s{cov}\p{{A},{B}}}{\Var({A})} \\
    &= \frac{\s{cov}\p{{A},{W}} + \s{cov}({A},{Z})}{\Var({A})}\\
    &=\frac{\E\pp{{A}{W}} - \E\pp{{A}} \E\pp{{W}}}{\Var({A})} \\
    &= \frac{\rho\E\pp{{A}^2} -  \rho \E^2\pp{{A}}}{\Var({A})}\\
    &= \frac{\rho\Var({A})}{\Var({A})} \\ 
    &= \rho.
\end{align}
\end{comment}
Thus, we have $\kappa_{\ell} = \rho^{\ell} = \s{corr}(\calP,\calQ)^{\ell}$, which demonstrates that the $\chi^2$ distribution described belongs to the class $\calD_3$.

\end{document}